\documentclass{article}
\usepackage{nips_2018}
%
%
\usepackage[usenames,dvipsnames]{xcolor}

\usepackage{graphicx,amsthm,amsmath, amssymb, natbib, enumerate}
\newtheorem{theorem}{Theorem}
\newtheorem{lemma}[theorem]{Lemma}
\newtheorem{corollary}[theorem]{Corollary}

\newtheorem{remark}{Remark}

\begin{document}
	
	%
	
	%
	
	\title{Statistical Optimality of Interpolated Nearest Neighbor Algorithms}
	\author{
		Yue ~Xing\\
		Department of Statistics\\
		Purdue University\\
		West Lafayette, Indiana, USA \\
		\texttt{xing49@purdue.edu} \\
		\And
		Qifan ~Song \\
		Department of Statistics \\
		Purdue University\\
		West Lafayette, Indiana, USA \\
		\texttt{qfsong@purdue.edu} \\
		\AND
		Guang ~Cheng \\
		Department of Statistics \\
		Purdue University \\
		West Lafayette, Indiana, USA \\
		\texttt{chengg@purdue.edu} \\
	}
	
	\maketitle
	\begin{abstract}
		In the era of deep learning, understanding over-fitting phenomenon becomes increasingly important. It is observed that carefully designed deep neural networks achieve small testing error even when the training error is close to zero. One possible explanation is that for many modern machine learning algorithms, over-fitting can greatly reduce the estimation bias, while not increasing the estimation variance too much.
		To illustrate the above idea, we prove that the proposed interpolated nearest neighbor algorithm achieves the minimax optimal rate in both regression and classification regimes, and observe that they are empirically better than the traditional $k$ nearest neighbor method in some cases. 
	\end{abstract}
	\section{Introduction}
	
	
	In deep learning, with the structure of neural networks getting more and more complicated, computer scientists proposed various approaches, such as Dropout, to handle the possible over-fitting issues (\cite{srivastava2014dropout,gal2016dropout,chen2014big,guo2016deep,lecun2015deep}). However, recent studies, for example \cite{zhang2016understanding}, demonstrate that deep neural networks have a small generalization error even when the training data is perfectly fitted.  Similar phenomenon of strong generalization performance for over-fitted models occurs in other modern machine learning algorithms as well, including kernel machines, boosting and random forests.
	
	Inspired by these observations, this work will investigate the statistical optimality of perfectly fitted (interpolated) models by nearest neighbor
	algorithms (NN). Specifically, given training data $\{x_i,y_i\}_{i=1}^n$, we study the regression estimator
	\[	 \widehat\eta(x) = \sum_{x_i\in N_k(x)}w_i y_i,	\]
	and its corresponding classifier 
	\[\widehat g(x)=1\{\widehat\eta(x)>1/2\},\]
	where $N_k(x)$ denotes the set of the nearest $k$ neighbors of $x$, and the weight $w_i$'s are designed in the way such that 
	$\widehat\eta(x_i)=y_i$.
	
	
	Traditional $k$-NN assigns $w_i=1/k$ and does not perfectly fit the data. Its asymptotic behavior and convergence rate have been well studied by many works (\cite{cover1968rates,wagner1971convergence,fritz1975distribution,schoenmakers2013optimal,sun2016stabilized,CD14}).
	
	\cite{belkin2018over-fitting,belkin2018does} designed an interpolated NN algorithm by a normalized polynomial weight function, and further extended this idea to Nadaraya-Watson kernel regression. \cite{belkin2018does} derived the regression optimal rate using Nadaraya-Watson kernel, and the results in  \citep{belkin2018over-fitting} are claimed to be optimal without proof. And our goal here is to design a new type of weighting scheme with proven optimal rates for both regression and classification objectives.	Specifically, the mean squared error (MSE) of $\widehat\eta(x)$ and the risk bound of the classifier $\widehat g(x)$ (under the  margin  condition  of \cite{tsybakov2004optimal}) are proven to be minimax optimal.


	
	Additionally, we provide an intuitive explanation on why the interpolated-NN can perform potentially better than the traditional $k$-NN. In fact, there is a bias-variance trade-off for the nearest neighbor methods: the traditional $k$-NN minimizes the variance to some extent, while interpolated-NN tries to reduce the bias. Although theoretically both of them attain the same optimal rate of MSE, our empirical studies demonstrate that interpolated-NN always yields better estimation and prediction. We conjecture that the superior performance of interpolated-NN over $k$-NN is due to a smaller multiplicative constant of convergence speed.
	

	\section{Interpolated-NN Algorithm and Model Assumptions}\label{alg}
	Let $\mathcal X\subset \mathbb{R}^d$ be the support of $X$ and $\mu$ be the probability measure of $X$ on $\mathcal{X}$. Define $\eta(x)=E(Y|X=x)$, where the response variable $Y$ can be either binary (classification problem) or continuous (regression problem). Given $n$ iid observations $\{x_i, y_i=y(x_i)\}_{i=1}^n$ and a test sample $x$, let $x_{(i)}$ denote the $i$th nearest neighbor of $x$ under $\mathcal L_2$ distance, and a weighted-NN algorithm predicts the mean response value as
	\begin{eqnarray*}
		\widehat\eta(x)=\sum_{i=1}^k w_iy(x_{(i)}),
	\end{eqnarray*}
	where $w_i=w_i(x_1,\dots,x_n)$ are some data-dependent nonnegative weights	satisfying $\sum_{i=1}^kw_i=1$.
	
	To induce an exact data interpolation, it is sufficient to require $w_i\rightarrow 1$ as $\|x_{(i)}-x\|\rightarrow0$. Following \cite{belkin2018over-fitting}, we construct the weight as:
	\begin{eqnarray*}
		w_i&=& \frac{\phi\big( \frac{\|x_{(i)}-x\|}{\|x_{(k+1)}-x\|} \big)}{\sum_{j=1}^k \phi\big( \frac{\|x_{(j)}-x\|}{\|x_{(k+1)}-x\|} \big)},
	\end{eqnarray*}
	for some positive function $\phi$ on [0,1] which satisfies $\lim_{t\rightarrow0}\phi(t)=\infty$, and if $\|x_{(i)}-x\|=0$, we define $w_i=1$. The denominator term $\|x_{(k+1)}-x\|$ is for normalization purpose. Note that conditional on $X_{(k+1)}$, $\|X_i-x\|/\|X_{(k+1)}-x\|$s are independent variables in [0,1] for all $X_i$ belonging to the $k$-neighborhood of $x$. The function $\phi$ plays a crucial role for the analysis of $\widehat\eta$ and $\widehat g$. For example, if $\phi(t)$ increases too fast as $t\rightarrow0$, the weighted average $\widehat\eta$ is always dominated by $y(x_{(1)})$. In this case, the variance of $\widehat\eta$ and $\widehat g$ will be too large. Thus the following condition on the choice of $\phi$ is needed:
	\begin{enumerate}
		\item[A.0] For any random variable $T\in[0,1]$ whose density is bounded, the moment generating function of $\phi(T)$ exists, i.e., there exist some $s>0$ and $M$, such that $\mathbb{E}(e^{s\phi(T)})< M$.
	\end{enumerate}
	Technically, this condition allows us to bound the un-normalized weight average $\sum_{i=1}^k\phi(\|x_{(i)}-x\|/\|x_{(k+1)}-x\|)y(x_{(i)})$ by exponential concentration inequalities.
	In general, any positive function $\phi$ satisfying $\phi(1/u) = O(\log(u))$ as $u\rightarrow \infty$ meets the above condition, and one typical example could be $\phi(t)= 1-c\log(t)$ for any constant $c>0$. \cite{belkin2018over-fitting} chose $\phi$ to be $\phi(t)=t^{-\kappa}$ for some $0<\kappa<d/2$, which unfortunately doesn't satisfy the above condition.
	Visual comparison among different choices of $\phi$ can be found in Figure \ref{fig:weight}. Some toy regression examples are demonstrated in the appendix to compare different weighting schemes of interpolated-NN with $k$-NN.

	\begin{figure}
		\centering
		\includegraphics[scale=0.9]{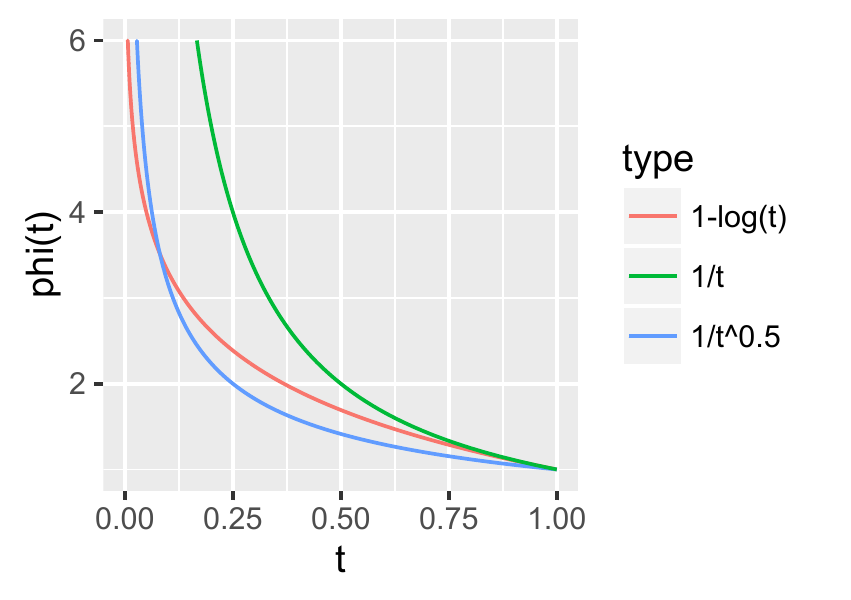}
		\caption{Interpolated Weighting Scheme}
		\label{fig:weight}
	\end{figure}

	Recall that the classifier is defined as $\widehat{g}(x):=1\left\{\widehat\eta(x)\ge 1/2\right\}$, with the Bayes classifier $g(x):=1\{\eta(x)\ge1/2\}$. Define the point-wise excess risk $R_{n,k}(x)-R^*(x)$, where
	\begin{eqnarray*}
		R_{n,k}(x)&=& P(Y\neq \widehat{g}(x)|X=x),\\
		R^*(x)&=&\min(\eta(x),1-\eta(x)).
	\end{eqnarray*}
	
	We next study the asymptotic rate of mean squared error of $\widehat\eta$ and expected excess risk for $\widehat g$ by imposing some regularity conditions. For any $d$-dimensional real-valued $x$ in $\mathcal{X}$, let $B(x,r)$ represent the closed ball with radius $r$ and center $x$.
	\begin{enumerate}
		\item[A.1] Finite variance: $\bar{\sigma}^2:=\sup\limits_{x\in\mathcal X}Var(Y|X=x)<\infty$.
		\item[A.2] Smoothness condition: $|\eta(x)-\eta(y)|\leq A\|y-x\|^{\alpha}$ for some $\alpha>0$.
		\item[A.3] Regularity condition: let $\lambda$ be the Lebesgue measure on $\mathbb{R}^d$, then there exists positive $(c_0,r_0)$ such that for any $x$ in the support $\mathcal{X}$,
		\begin{equation*}
		\lambda(\mathcal{X}\cap B(x,r))\geq c_0\lambda(B(x,r)),
		\end{equation*}
		for any $0<r\leq r_0$.
		\item[A.4] The density of $X$ is finite, and twice-continuously differentiable.
		\item[A.5] Margin condition:  $P(|\eta(x)-1/2|<t)\leq Bt^{\beta}$.
	\end{enumerate}
	Assumption (A.2) is commonly assumed in the literature. A larger value of $\alpha$ implies more accurate estimation of $\eta$ due to the minimax rate $O(n^{-2\alpha/(2\alpha+d)})$. 
	Assumption (A.3) was first introduced by \cite{audibert2007fast}, and it essentially ensures that for any $x\in\mathcal X$, all its $k$ nearest neighbors are sufficiently close to $x$ with high probability. 
	If $\mathcal{X}=\mathbb{R}^d$ or $\mathcal{X}$ is convex, this regularity condition is automatically satisfied. Assumption (A.5) is the so-called Tsybakov low noise condition (\cite{audibert2007fast, tsybakov2004optimal}), and this assumption is of interest mostly in the classification context. Under smoothness condition (A.2) and marginal condition (A.5), it is well known that the optimal rate for nonparametric regression and classification are $\mathbb{E}[(\widehat{\eta}(X)-\eta(X))^2]= O(n^{-2\alpha/(2\alpha+d)})$ and $\mathbb{E}[R_{n,p}(X)-R^*(X)]=O(n^{-\alpha(\beta+1)/(2\alpha+d)})$, respectively. Here, $\mathbb{E}$ means the expectation over all observed data $\{x_i,y_i\}_{i=1}^n$ and new observation $X\sim \mu$.
	
	The analysis of classification is very subtle especially when $\eta(x)$ is near $1/2$. Hence, we need to have the following partition over the space $\mathcal X$. 	Define $v_p(x)$ as the $p$-th quantile of $\|X-x\|$, and for the ball $B(x,r)$,
	\begin{equation*}
	\eta(B(x,r)):=\frac{\mathbb{E}\left(\phi\left(\frac{\|X_1-x\|}{\|X_{(k+1)}-x\|}\right)\eta(X_1)\bigg|\|X_{(k+1)}-x\|= r\right)}{\mathbb{E}\left(\phi\left(\frac{\|X_1-x\|}{\|X_{(k+1)}-x\|}\right)\bigg|\|X_{(k+1)}-x\|= r\right)}.
	\end{equation*}
	Also define
	\begin{align*}
	\mathcal{X}^+_{p,\Delta}=\{x\in\mathcal{X}|\eta(x)>\frac{1}{2},\eta(B(x,r))\geq &\frac{1}{2}+\Delta ,\forall r<v_p(x)\},\\
	\mathcal{X}^-_{p,\Delta}=\{x\in\mathcal{X}|\eta(x)<\frac{1}{2},\eta(B(x,r))\leq &\frac{1}{2}-\Delta, \forall r<v_p(x)\},
	\end{align*}
	with the decision boundary area:
	\begin{equation*}
	\partial_{p,\Delta}=\mathcal{X}\setminus(\mathcal{X}^+_{p,\Delta}\cup\mathcal{X}^-_{p,\Delta}).
	\end{equation*}
	
	\section{Main Results}
	In this section, we will present our asymptotic results for the interpolated-NN algorithm described in Section \ref{alg}. 
	Our main Theorems \ref{wnn} and \ref{class} state that the interpolated estimator $\widehat\eta$ and classifier $\widehat g$ are asymptotically rate-optimal in terms of MSE and excess risk, respectively. In other words, data interpolation or data over-fitting doesn't necessarily jeopardize the statistical performance of a learning algorithm, at least in the minimax sense. These theoretical findings are supported by our numerical experiments in Section~\ref{sec:num}.
	
	\subsection{Over-Fitting Is Sometimes Even Better}\label{knn_wnn_compare}
	Let us start from an intuitive comparison between the interpolated-NN and traditional $k$-NN. For any weighted-NN algorithm, the following bias-variance decomposition (\cite{belkin2018over-fitting}) holds (with $\alpha$ and $A$ in Assumption (A.2)):
	\begin{align}
	&\mathbb{E}\big\{ (\widehat{\eta}(x)-\eta(x))^2 \big\}\leq A^2\mathbb{E}\bigg[\sum_{i=1}^kW_i\|X_{(i)}-x\|^{\alpha}\bigg]^2+ \sum_{i=1}^k \mathbb{E}\bigg[W_i^2(Y(X_{(i)})-\eta(X_{(i)}))^2 \bigg].\label{mse}
	\end{align}
	This upper bound in (\ref{mse}) can be viewed as two parts: squared bias and variance, respectively, and $W_i$ denotes the random weight $w_i(X_1,\dots,X_n)$.
	The $k$-NN ($W_i\equiv 1/k$) can be interpreted as optimal weight choice which minimizes the variance term in the case that ${\rm Var}(Y_i)$ are constant, e.g., under regression setting that $Y_i=\eta(X_i)+\epsilon_i$ with iid error term $\epsilon_i$, or under classification setting that $Y_i\sim {\rm Bern}(\eta(X_i))$ with constant function $\eta$.
	On the other hand, interpolated-NN assigns larger weight for closer neighbor, which will result in a smaller value for the weighted average $\sum_{i=1}^kW_i\|X_{(i)}-x\|^{\alpha}$, i.e., the bias term gets smaller. Therefore, we argue that $k$-NN and interpolated-NN employ different strategies in reducing the upper bound of MSE. The former one emphasizes reducing the variance, while the latter one emphasizes reducing the bias. The above intuitive arguments are also well validated by our toy examples described in appendix.

	Later, we will prove that the proposed interpolated-NN method, although potentially enlarge the variance term, is still minimax rate-optimal for both regression and classification cases. 
	
	\begin{remark}
		In some literature on the asymptotic properties of $k$NN such as \cite{bailey1978note}, the number of neighbors, i.e., $k$, is assumed to be fixed, while $n$ diverges. Under this scenario, the upper bound in (\ref{mse}) becomes exactly the MSE, and all neighbors become samples at $x$. As a result, the best weighting scheme to minimize the variance of $k$ i.i.d. Bernoulli random variables is just $k$NN. 
	\end{remark}
	
	\subsection{Regression}
	
	As seen in the decomposition (\ref{mse}), the bias term involves $\|X_{(i)}-x\|$, which is the empirical quantile for $\|X-x\|$. Hence our first lemma, whose proof can be found in appendix, will study the asymptotic behavior of this empirical quantile.
	
	Let $F_x(\cdot)$, $\widehat F_x(\cdot)$ and $f_x(\cdot)$ be the c.d.f.,
	empirical c.d.f and p.d.f. of $\|X-x\|$, $v_p(x)$ and $\widehat {v}_p(x)$ be the $p$th quantile and $p$th empirical quantile of $\|X-x\|$, respectively.
	
	\begin{lemma}\label{lemma}
		Given any $x\in\mathcal{X}$, let  $p=k/n$ (thus $\widehat{v}_{p}(x)=\|X_{(k)}-x\|$), and 
		$$A_2(x)=F_x(\widehat{v}_{p}(x))+\widehat{F}_x(v_{p}(x))-\widehat{F}_x(\widehat{v}_{p}(x))-F_x(v_{p}(x)).$$
		Under Assumption (A.4), if $n^{-1}|\widehat{v}_{p}(x)-v_{p}(x)|\rightarrow0$, $1/(np)\rightarrow 0$, and $v_{p}(x)=O(p^{1/d})$ with $n^{-1/2}/p^{1/d-1/2}\rightarrow 0$, $f_x(v_{p}(x))=O(p^{1-1/d})$, then
		$$\mathbb{E}A_2^2(x)\rightarrow O(p^{1/2}/n^{3/2}). $$
	\end{lemma}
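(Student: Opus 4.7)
The plan is to reduce $A_2(x)$ to a pure statement about the uniform empirical process via the probability integral transform, and then to compute its second moment by conditioning on the $k$-th uniform order statistic.

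First, I would set $U_i := F_x(\|X_i - x\|)$, which are i.i.d.\ Uniform$(0,1)$. Then $F_x(v_p(x)) = p$, $F_x(\widehat v_p(x)) = U_{(k)}$, $\widehat F_x(\widehat v_p(x)) = k/n = p$, and $\widehat F_x(v_p(x)) = M/n$ with $M := \#\{i : U_i \le p\} \sim \mathrm{Bin}(n,p)$. The four-term definition of $A_2$ therefore collapses to
$$A_2(x) = (U_{(k)} - p) + (M/n - p) = (U_{(k)} - p) - (k-M)/n.$$
Although each summand is individually of order $\sqrt{p/n}$, the second form exposes the mechanism of cancellation: $U_{(k)} - p$ is the signed length of the interval between $p$ and $U_{(k)}$, and $(k - M)/n$ is the empirical mass in that interval, so $A_2$ is an increment of the empirical process of the $U_i$'s over a short random interval.

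Next, I would condition on $U_{(k)}$. On $\{U_{(k)} = u\}$ with $u \ge p$, the other $n-1$ samples split into $k-1$ i.i.d.\ Uniform$(0,u)$ and $n-k$ i.i.d.\ Uniform$(u,1)$ variables, giving $M \mid U_{(k)} = u \sim \mathrm{Bin}(k-1, p/u)$, and a symmetric expression applies when $u < p$. A direct computation yields
$$\mathbb{E}[A_2 \mid U_{(k)} = u] = u - 2p + \frac{(k-1)p}{nu}, \qquad \mathrm{Var}(A_2 \mid U_{(k)} = u) = \frac{(k-1)p(u-p)}{n^2 u^2}.$$
Taylor expansion around $u=p$ then shows that $\mathbb{E}[A_2 \mid U_{(k)}] = O((U_{(k)}-p)^2/p) + O(1/n)$ and $\mathrm{Var}(A_2 \mid U_{(k)}) \asymp |U_{(k)}-p|/n$ whenever $|U_{(k)}-p| = o(p)$. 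Combining with the standard moments $\mathbb{E}|U_{(k)} - p| = O(\sqrt{p/n})$ and $\mathbb{E}(U_{(k)} - p)^4 = O(p^2/n^2)$ for the $\mathrm{Beta}(k,n-k+1)$ distribution, the law of total variance gives
$$\mathbb{E}[A_2^2] = \mathbb{E}[\mathrm{Var}(A_2 \mid U_{(k)})] + \mathbb{E}[(\mathbb{E}[A_2 \mid U_{(k)}])^2] = O\!\left(\frac{\sqrt{p/n}}{n}\right) + O(1/n^2) = O\!\left(\frac{\sqrt{p}}{n^{3/2}}\right),$$
in which the first term dominates because $1/(np) \to 0$.

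The main technical obstacle will be to justify the Taylor expansions above uniformly in $u$. I would truncate on the event $\{|U_{(k)} - p| \le C\sqrt{p\log n/n}\}$, where the expansions are valid, and handle its complement via exponential concentration for $U_{(k)}$ — e.g., a Chernoff bound on $\mathrm{Bin}(n,p)$ together with the duality $\{U_{(k)} \le t\} = \{\sum_i 1\{U_i \le t\} \ge k\}$ — so that the tail contribution is $o(\sqrt{p}/n^{3/2})$. The remaining hypotheses in the lemma concerning $f_x$, $v_p(x)$, and Assumption (A.4) appear tailored to an alternative $X$-space argument that proceeds via a Bahadur representation of $\widehat v_p(x) - v_p(x)$; the uniform-transform route sketched here largely bypasses them, but either approach should deliver the same rate.
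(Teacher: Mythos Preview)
Your proposal is correct and takes a genuinely different route from the paper. The paper never passes to uniforms; it works in $X$-space throughout. It first bounds the $2m$th moment of the empirical-process increment $A_2(a,b)=F(a)-\widehat F(a)-F(b)+\widehat F(b)$ for \emph{deterministic} $a,b$ by $O\big(((a-b)(f(a)+f(b)))^m/n^m\big)$, then handles the random endpoint $\widehat v_p$ by a truncation argument: it shows $\mathbb{E}[F(v_p)-F(\widehat v_p)]^{2m}=O((p/n)^m)$ via a chain of Markov-type bounds with auxiliary parameters $a,b,m_1,m_2$, converts this into a deviation bound $P(|\widehat v_p-v_p|>c)\le O((p/(nc^2f^2))^m)$, and finally balances $\mathbb{E}A_2^2(v_p+c,v_p)=O(cf/n)$ against the tail to obtain $O(p^{1/2}/n^{3/2})$. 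This is effectively a Bahadur-style argument and is precisely what the extra hypotheses on $f_x(v_p)$ and $v_p=O(p^{1/d})$ are there for.

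Your probability-integral-transform reduction is cleaner: once $A_2=(U_{(k)}-p)+(M/n-p)$ with $M\mid U_{(k)}$ binomial, the exact conditional mean and variance formulas make the cancellation explicit, and the only asymptotic inputs are the standard Beta moments $\mathbb{E}|U_{(k)}-p|\asymp\sqrt{p/n}$ and $\mathbb{E}(U_{(k)}-p)^4=O(p^2/n^2)$ together with $1/(np)\to 0$. The truncation you mention is still needed to justify replacing $1/U_{(k)}$ by its expansion, but a Chernoff bound on the Beta tail handles this with far less bookkeeping than the paper's multi-parameter optimization. What you lose is only generality you do not need here: the paper's moment machinery would transfer to settings where $F_x$ is not continuous or where higher moments of $A_2$ are required, whereas your argument is tailored to the second moment under a continuous $F_x$. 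For the lemma as stated, your route is both shorter and sharper.
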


	The above lemma, together with the technical tool developed in \cite{sun2009general,sun2010asymptotic}, facilitates our analysis for the bias term in (\ref{mse}). As for the variance term in (\ref{mse}), we can bound it by the similar approach used by \cite{belkin2018over-fitting}. Together, it leads to the following theorem.
	
	\begin{theorem}[Rate-Optimality of Interpolated-NN Regression]\label{wnn}
		Denote $int(\mathcal{X})$ as the interior of $\mathcal{X}$. Under Assumption (A.0)-(A.4), if $\alpha\in(0,1]$, then for any fixed $x\in int(\mathcal{X})$, there exists constants $c_1\rightarrow 1^+$, $(\mu,x)$-dependent $c_2$, and a constant $c_3>0$, when $n\rightarrow \infty$,
		\begin{equation}\label{wnnresult}
		\begin{split}
		&\mathbb{E}\big\{ (\widehat{\eta}(x)-\eta(x))^2 \big\}
		\leq A^2 c_1 \bigg(\frac{k}{n} \bigg)^{2\alpha/d}+ c_2\bar{\sigma}^2\bigg( ke^{-c_3k}+\frac{1}{k} \bigg).
		\end{split}
		\end{equation}
		
		Therefore, taking $k\asymp (n^{2\alpha/(2\alpha+d)})$, $\mathbb{E}\big\{ (\widehat{\eta}(x)-\eta(x))^2 \big\}$ reaches the optimal rate of $O(n^{-2\alpha/(2\alpha+d)})$.
	\end{theorem}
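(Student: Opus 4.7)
The plan is to take the bias–variance decomposition in (\ref{mse}) as the starting point and bound the two pieces separately, using Lemma~\ref{lemma} for the bias and Assumption (A.0) together with a Chernoff-type concentration for the variance. The statement about $c_1 \to 1^+$ indicates that, for the leading constant on the bias side, I will need to carry through a sharp (not just order-of-magnitude) analysis of the empirical $k$-th nearest-neighbor distance.

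For the bias term, the first observation is that since $\sum_i W_i = 1$, $W_i \ge 0$ and $\|X_{(i)}-x\| \le \|X_{(k+1)}-x\|$ for $i \le k$, we have the crude but convenient upper bound
\begin{equation*}
\Bigl(\sum_{i=1}^k W_i \|X_{(i)}-x\|^{\alpha}\Bigr)^{\!2} \;\le\; \|X_{(k+1)}-x\|^{2\alpha}.
\end{equation*}
Setting $p=k/n$, the quantity $\|X_{(k+1)}-x\|$ is essentially $\widehat v_p(x)$, so I would apply Lemma~\ref{lemma} (via the identity $F_x(\widehat v_p)=\widehat F_x(v_p)+A_2(x)$ and $\widehat F_x(\widehat v_p)=p+O(1/n)$) to show that $\widehat v_p(x)^{2\alpha}/v_p(x)^{2\alpha}\to 1$ in mean-square, and then use Assumption~(A.4) to obtain $v_p(x)^{2\alpha} \lesssim (k/n)^{2\alpha/d}$. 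Combining these yields the $A^2 c_1 (k/n)^{2\alpha/d}$ bound with $c_1\to 1^+$.

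For the variance term, the Bernoulli/sub-Gaussian reduction gives $\sum_i \mathbb{E}[W_i^2(Y(X_{(i)})-\eta(X_{(i)}))^2] \le \bar\sigma^2 \,\mathbb{E}\bigl[\sum_i W_i^2\bigr]$, so everything reduces to controlling $\sum_i W_i^2 = \sum_i \phi(U_i)^2 \big/ \bigl(\sum_j \phi(U_j)\bigr)^2$, where $U_i=\|X_{(i)}-x\|/\|X_{(k+1)}-x\|$. Conditioning on $\|X_{(k+1)}-x\|$ makes the $U_i$'s i.i.d.\ on $[0,1]$ with bounded density, so Assumption~(A.0) applies: let $m=\mathbb{E}[\phi(U_1)]$ and consider the event $E_k=\{\sum_j \phi(U_j) \ge km/2\}$. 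A Chernoff bound using the MGF gives $P(E_k^c) \le e^{-c_3 k}$. On $E_k$, bound $\sum_i W_i^2 \le 4\sum_i \phi(U_i)^2/(km)^2$ and use $\mathbb{E}[\phi(U_1)^2]<\infty$ (again from (A.0)) to get $\mathbb{E}[\sum_i W_i^2\cdot \mathbf{1}_{E_k}] = O(1/k)$; on $E_k^c$, using the trivial bound $\sum_i W_i^2\le k$ contributes at most $ke^{-c_3 k}$. Adding the two pieces and multiplying by $\bar\sigma^2$ gives the variance term in (\ref{wnnresult}); the $(\mu,x)$-dependent constant $c_2$ absorbs $m$, the second moment of $\phi(U_1)$, and the density factor.

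Summing the two bounds and choosing $k \asymp n^{d/(2\alpha+d)}$ balances $(k/n)^{2\alpha/d}$ against $1/k$, producing the minimax rate $n^{-2\alpha/(2\alpha+d)}$; the term $ke^{-c_3 k}$ is negligible at this scaling. The main technical obstacle I anticipate is the bias step: converting Lemma~\ref{lemma}'s control on $A_2(x)$ into a sharp moment bound on $\widehat v_p(x)^{2\alpha}$ with a multiplicative constant that tends to $1$ requires combining the quantile concentration with the twice-differentiability of $f_x$ in (A.4), and carefully handling that $\alpha\le 1$ prevents using a Taylor expansion of $t\mapsto t^{2\alpha}$; the variance step, by contrast, is essentially a one-shot application of the MGF hypothesis (A.0).
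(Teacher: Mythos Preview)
Your proposal is correct and tracks the paper's proof closely. For the bias, the paper does exactly what you describe: it bounds the bias by $A^2\,\mathbb{E}\|X_{(k)}-x\|^{2\alpha}$ and controls this empirical quantile via the Bahadur--Sun expansion $\widehat v_p = v_p + f(v_p)^{-1}\bigl[p-\widehat F(v_p)\bigr] + f(v_p)^{-1}(A_1+A_2+A_3)$, invoking Lemma~\ref{lemma} for $A_2$ and (A.4) for $v_p\asymp (k/n)^{1/d}$, so that all correction terms are $o\bigl((k/n)^{2\alpha/d}\bigr)$ and $c_1\to 1^+$.

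For the variance the paper splits on a slightly different event than you do: it conditions on $\{\|X_{(k+1)}-x\|\le v_{2p}\}$ (whose complement has probability $O(e^{-c_3k})$ by a binomial Chernoff bound), and on this good event bounds the denominator $\sum_j\phi(U_j)\ge k$ deterministically (implicitly using $\phi\ge 1$, as holds for the paper's running example $\phi(t)=1-c\log t$) to obtain $\mathbb{E}[W_1^2\mid\cdots]\le k^{-2}\,\mathbb{E}[\phi(U_1)^2]$. Your Chernoff split on $E_k=\{\sum_j\phi(U_j)\ge km/2\}$ is a bit cleaner in that it avoids the need for $\phi\ge 1$, but note that $m$, $\mathbb{E}[\phi(U_1)^2]$ and the Chernoff constant for $P(E_k^c)$ all depend on the conditional law of $U_1$ and hence on the random radius $\|X_{(k+1)}-x\|$; to make $c_2,c_3$ genuine nonrandom constants you still need a preliminary restriction to small radius (precisely the paper's event) before invoking (A.0). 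With that minor addition the two arguments coincide. One small slip at the end: the balancing choice is $k\asymp n^{2\alpha/(2\alpha+d)}$, not $n^{d/(2\alpha+d)}$.
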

	
	
	\begin{proof}[Proof]
		First, it is trivial to see that the bias-variance representation (\ref{mse}) can be further bounded by
		\begin{align*}
		\mathbb{E}\big[\widehat{\eta}(x)-\eta(x)\big]^2 \leq A^2\mathbb{E} \|X_{(k)}-x\|^{2\alpha}+ \bar{\sigma}^2k\mathbb{E}(W_1^2).
		\end{align*}
		
		Therefore the remaining task is to figure out the convergence rate 
		of $\mathbb{E} \|X_{(k)}-x\|^{2\alpha}$ and calculate $\mathbb{E}(W_1^2)$. 
		
		For bias term $\mathbb{E} \|X_{(k)}-x\|^{2\alpha}$, note that $\|X_{(k)}-x\|$ is the $p$(=$k/n$)th empirical quantile of $F_x$, i.e., $\widehat{v}_p(x)=\|X_{(k)}-x\|$ (for simplicity, we write $F_x$ as $F$, $f_x$ as $f$, $v_p(x)$ as $v_p$ and $\widehat{v}_p(x)$ as $\widehat{v}_p$ in what follows), and this quantile is called Value-at-Risk in the area of finance. From \cite{sun2009general}, we have
		\begin{eqnarray*}
			\widehat{v}_{p}&=& v_{p} + \frac{1}{f(v_{p})}\bigg( p-\frac{1}{n}\sum_{i=1}^n 1_{\{ r_i\leq v_{p} \}} \bigg) + \frac{1}{f(v_{p})} (A_1+A_2+A_3),
		\end{eqnarray*}
		where $A_1=O(f'(v_{p})(\widehat{v}_{p}-v_{p})^2)$, $A_2=F(\widehat{v}_{p})+\widehat{F}(v_{p})-\widehat{F}(\widehat{v}_{p})-F(v_{p})$, and $A_3=\widehat{F}(\widehat{v}_{p})-F(v_{p})$. Compared with $\widehat{v}_{p}-v_{p}$, $A_1/f(v_p)$ is a smaller order term. Since $p=k/n$, we always have $\widehat{F}(\widehat{v}_{p})=F(v_{p})=p$, hence $A_3\equiv 0$. 
		
		Within some neighborhood of $x$, $B(x,r)$ with $r\leq r_0$, both maximum and minimum density of $X$ are bounded. Denote them as $p_{\max}$ and $p_{\min}$. As $n\rightarrow\infty$, $v_p\leq r$, thus by Assumption (A.4), $v_{p}^dp_{\max}\geq k/n$ and c$v_{p}^dp_{\min}\leq k/n= F(v_{p})$ for some constant $c$. This implies that $v_{p}\geq (k/np_{\max})^{1/d}$, and 
		\begin{eqnarray*}
			f(v_{p})&=& O\bigg( \frac{k}{n} \bigg)^{1-1/d}.
		\end{eqnarray*}
		
		As a result, when $\alpha\in (0,1]$, define $r_i=\|X_i-x\|$, we have
		\begin{eqnarray*}
			&&\mathbb{E} \bigg( p-\frac{1}{n}\sum_{i=1}^n 1_{\{ r_i\leq v_{p} \}} \bigg)^{2\alpha} \leq\bigg[\mathbb{E} \bigg( p-\frac{1}{n}\sum_{i=1}^n 1_{\{ r_i\leq v_{p} \}} \bigg)^2\bigg]^{\alpha}
			= \bigg[\frac{k}{n^2}\bigg(1-\frac{k}{n}\bigg)\bigg]^{\alpha}\leq \bigg(\frac{k}{n^2}\bigg)^{\alpha}.
		\end{eqnarray*}
		Since $k\leq n$,\begin{eqnarray*}
			&&\frac{1}{f(v_{p})^{2\alpha}}\mathbb{E} \bigg( p-\frac{1}{n}\sum_{i=1}^n 1_{\{ r_i\leq v_{p} \}} \bigg)^{2\alpha}=O\bigg( \frac{k}{n} \bigg)^{2\alpha(1/d-1)} \mathbb{E} \bigg( p-\frac{1}{n}\sum_{i=1}^n 1_{\{ r_i\leq v_{p} \}} \bigg)^{2\alpha} =o\bigg(\frac{k}{n}\bigg)^{2\alpha/d}.
		\end{eqnarray*}
		In terms of $A_2$, based on Assumption (A.4) and Lemma \ref{lemma}, for $\alpha\in(0,1]$, 
		\begin{eqnarray*}
			\frac{1}{f(v_{p})^{2\alpha}}\mathbb{E}A_2^{2\alpha}&=&O\bigg( \frac{k}{n} \bigg)^{2\alpha(1/d-1)} \mathbb{E}A_2^{2\alpha}\\ &=&O\bigg(\frac{p^{-2\alpha+\alpha/2}}{n^{3\alpha/2}}\bigg)O\bigg( \frac{k}{n} \bigg)^{2\alpha/d} \\
			&=& O\bigg(\frac{p^{-3\alpha/2}}{n^{3\alpha/2}}\bigg)O\bigg( \frac{k}{n} \bigg)^{2\alpha/d}.
		\end{eqnarray*}
		As a result, when $pn\rightarrow\infty$, which is satisfied when taking $k\asymp(n^{2\alpha/(2\alpha+d)})$, we have
		\begin{eqnarray*}
			O\bigg(\frac{p^{-3\alpha/2}}{n^{3\alpha/2}}\bigg)=o(1),
		\end{eqnarray*}
		hence,
		\begin{eqnarray*}
			A^2\mathbb{E} \|X_{(k)}-x\|^{2\alpha}\leq A^2 c_1 \bigg(\frac{k}{n} \bigg)^{2\alpha/d},
		\end{eqnarray*}
		with $c_1\rightarrow 1^+$ in $n$. 
		
		For $\mathbb{E}W_1^2$, similar with \cite{belkin2018over-fitting}, we can bound it as
		\begin{eqnarray*}
			\mathbb{E}W_1^2&\leq& P(\|X_{(k+1)}-x\|>v_{2p})\\&&+\mathbb{E}[W_1^2|\|X_{(k+1)}-x\|<v_{2p}],
		\end{eqnarray*}
		where $v_{2p}$ is the $2k/n$th quantile of $\|X-x\|$.
		
		By the same argument in Claim A.6 of \cite{belkin2018over-fitting}, the probability $P(\|X_{(k+1)}-x\|>v_{2p})$ under Assumption (A.3) and (A.4) can be bounded by $O(\exp(-c_3k))$ for some $c_3>0$.
		
		On the other hand, by the arguments in Lemma 10 of \cite{CD14}, conditional on  $\|X_{(k+1)}-x\|$, for those $X_i$'s that belong to $x$'s $k$-neighborhood, $\|X_i-x\|/\|X_{(k+1)-x}\|$ are iid random variables in [0,1] . It follows that
		\begin{align*}
		\mathbb{E}[W_1^2|\|X_{(k+1)}-x\|<v_{2p}]
		=&\mathbb{E}\bigg[\frac{\left[\phi\left(\frac{\|X_1-x\|}{\|X_{(k+1)}-x\|} \right)\right]^2}{\left[\sum_{i=1}^k\phi\left(\frac{\|X_i-x\|}{\|X_{(k+1)}-x\|} \right)\right]^2}\bigg|\|X_{(k+1)}-x\|<v_{2p}\bigg]\\
		\leq& \frac{1}{k^2}\mathbb{E}\bigg[\phi\left(\frac{\|X_1-x\|}{\|X_{(k+1)}-x\|} \right)^2\bigg|\|X_{(k+1)}-x\|<v_{2p}\bigg].
		\end{align*}
		
		An $O(1/k)$ result can be directly obtained for $k\mathbb{E}[W_1^2|\|X_{(k+1)}-x\|<v_{2p}]$ through the existence of $\mathbb{E}e^{s\phi}$ (Assumption (A.0)).
	\end{proof}
	
	Theorem \ref{wnn} proves the point-wise MSE convergence result for $\widehat\eta$. By assuming that $\mathcal{X}$ is compact (which is also assumed in \cite{belkin2018over-fitting} and \cite{CD14}), it is not difficult to see that there exist constant $c_1$ and $c_2$ in (\ref{wnnresult}) such that  uniformly holds for all $x\in\mathcal{X}$.
	
	\begin{corollary}\label{coro}
		Under Assumption (A.0)-(A.4), if $\mathcal{X}$ is compact, then there exists constants $c_1\rightarrow 1^+$, $\mu$-dependent $c_2$, such that when $n\rightarrow \infty$,
		\begin{eqnarray*}
			\mathbb{E}\big\{ (\widehat{\eta}(X)-\eta(X))^2 \big\}\leq A^2 c_1 \bigg(\frac{k}{n} \bigg)^{2\alpha/d}+ c_2\bar{\sigma}^2\bigg( ke^{-c_3k}+\frac{1}{k} \bigg).
		\end{eqnarray*}
		
		Therefore, taking $k\asymp (n^{2\alpha/(2\alpha+d)})$, $\mathbb{E}\big\{ (\widehat{\eta}(X)-\eta(X))^2 \big\}$ reaches the optimal rate of $O(n^{-2\alpha/(2\alpha+d)})$.
	\end{corollary}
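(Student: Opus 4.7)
The plan is to revisit the proof of Theorem \ref{wnn} and show that every $x$-dependent quantity appearing there can be bounded uniformly for $x \in \mathcal{X}$ once $\mathcal{X}$ is compact, and then to integrate the resulting pointwise bound against $\mu$. Since the corollary's statement differs from Theorem \ref{wnn} only in that the outer expectation is taken over $X\sim\mu$ and the constants no longer depend on the point $x$, essentially no new analytic input is needed beyond a careful bookkeeping of constants.

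First I would identify where the constants $c_1, c_2, c_3$ in \eqref{wnnresult} pick up a dependence on $x$. Tracing through the proof of Theorem \ref{wnn}, the pointwise constants come from (i) the local density bounds $p_{\max}(x), p_{\min}(x)$ used to derive $f_x(v_p(x)) \asymp (k/n)^{1-1/d}$ and $v_p(x) \asymp (k/n)^{1/d}$, (ii) the regularity constant $c_0$ from Assumption (A.3) used in the concentration bound $P(\|X_{(k+1)}-x\|>v_{2p}(x)) \leq \exp(-c_3 k)$ following Claim A.6 of \cite{belkin2018over-fitting}, and (iii) the constants inside Lemma \ref{lemma} controlling $\mathbb{E}A_2^2(x)$. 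Assumption (A.3) already supplies universal $c_0, r_0$, and (A.0) on $\phi$ is $x$-independent, so the $1/k$ contribution to $k\mathbb{E}W_1^2$ comes with a constant that does not depend on $x$ at all. By compactness of $\mathcal{X}$ and the continuous differentiability of the density in (A.4), the density and its derivatives attain global upper bounds on $\mathcal{X}$; combined with (A.3), the density also admits a strictly positive lower bound on $\mathcal{X}$, so $p_{\max}, p_{\min}$ in the proof can be chosen uniformly in $x$. This makes the rate $(k/n)^{2\alpha/d}$ in the bias term and the rate $ke^{-c_3 k}+1/k$ in the variance term hold with uniform constants.

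The main obstacle I anticipate is exactly this uniform lower bound on the density: on a general compact support a continuous density could in principle approach zero near the topological boundary, and Theorem \ref{wnn} is stated only for $x\in\mathrm{int}(\mathcal{X})$. The way I would handle it is to note that Assumption (A.3) prevents the support from having cusps or lower-dimensional protrusions, so the density must be bounded away from zero at every $x\in\mathcal{X}$, and by continuity and compactness the resulting lower bound is uniform; alternatively, since $\mu$ is absolutely continuous, $\partial\mathcal{X}$ has $\mu$-measure zero so the pointwise bound of Theorem \ref{wnn} applies $\mu$-almost surely in $x$. Once this uniformity is in place, the proof concludes by integrating the pointwise estimate against $\mu$:
\[
\mathbb{E}(\widehat\eta(X)-\eta(X))^2 \;=\; \int \mathbb{E}_{\mathrm{data}}(\widehat\eta(x)-\eta(x))^2\,d\mu(x) \;\leq\; A^2 c_1\Bigl(\tfrac{k}{n}\Bigr)^{2\alpha/d}+c_2\bar\sigma^2\Bigl(ke^{-c_3 k}+\tfrac{1}{k}\Bigr),
\]
after which setting $k\asymp n^{2\alpha/(2\alpha+d)}$ balances the two terms and yields the minimax rate $O(n^{-2\alpha/(2\alpha+d)})$.
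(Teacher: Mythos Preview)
Your proposal is correct and follows the same line as the paper: the paper's only justification for the corollary is the sentence preceding it, which simply asserts that under compactness of $\mathcal{X}$ the constants $c_1,c_2$ in \eqref{wnnresult} can be chosen uniformly in $x$, and your plan is precisely to verify this uniformity by tracking the $x$-dependence through the proof of Theorem~\ref{wnn}. Your treatment of the boundary issue (via the $\mu$-null boundary and the uniform density bounds implied by compactness together with Assumptions~(A.3)--(A.4)) in fact goes beyond what the paper spells out.
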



	\begin{remark}
		It is worth to mention that Assumption (A.0) is not necessary for Theorem \ref{wnn} and Corollary \ref{coro}. Our proof can be easily adapted to show that interpolated-NN regression estimator with polynomial $\phi(t)=t^{-\kappa}$ is also rate-optimal.\footnote{\cite{belkin2018over-fitting} claims the optimal rate of convergence of $\widehat{\eta}(x)$ with $\phi(t)=t^{-\kappa}$ based on heuristic argument on the order of $\mathbb{E}\|X_{(k)}-x\|$.} 
	\end{remark}
	\begin{remark}	
		Instead of the nearest neighbor method, one can alternatively consider kernel regression estimator $\sum_{\|x_i-x\|\leq h}w_iy(x_i)$ with interpolated weights (\cite{belkin2018does}). In such a case, the theoretical investigation of $p$th quantile of $F_x$ can be avoided, and if the bandwidth $h$ is of the same order of $v_{p}$, one can still obtain the optimal rate.
	\end{remark}


	\subsection{Classification}
	In this section, we investigate the theoretical properties of the interpolated-NN classifier $\widehat g$, and the next theorem establishes the statistical optimality of $\widehat g$ in terms of excess risk.
	
	\begin{theorem}[Rate-Optimality of Interpolated-NN Classification]\label{class}
		Under Assumption (A.0), (A.2)-(A.5), assume $\mathcal{X}$ is compact, then for any $\delta\in(0,1)$, taking $k\asymp(n^{2\alpha/(2\alpha+d)}(\log\delta)^{d/(2\alpha+d)})$, we have
		\begin{equation}\label{class1}
		P(g(X)\neq \widehat{g}(X))\leq \delta+C_0\bigg( \frac{\log(1/\delta)}{n} \bigg)^{\frac{\alpha\beta}{2\alpha+d}}
		\end{equation}
		for some constant $C_0>0$.
		
		Moreover, taking $k\asymp(n^{2\alpha/(2\alpha+d)})$,
		\begin{equation}\label{class2}
		\mathbb{E}(R_{n,p}(X)-R^*(X))\leq O(n^{-\frac{\alpha(\beta+1)}{2\alpha+d}}).
		\end{equation}
	\end{theorem}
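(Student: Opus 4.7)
The plan is to exploit the partition $\mathcal{X}=\mathcal{X}^+_{p,\Delta}\cup\mathcal{X}^-_{p,\Delta}\cup\partial_{p,\Delta}$ introduced in Section~\ref{alg}, treating $\Delta$ as a free parameter. On $\mathcal{X}^\pm_{p,\Delta}$ the conditional mean of $\widehat\eta(x)$ given $r=\|X_{(k+1)}-x\|$ equals $\eta(B(x,r))$, which is separated from $1/2$ by at least $\Delta$ whenever $r<v_p(x)$, so concentration should force $\widehat\eta(x)$ to the correct side of $1/2$; on $\partial_{p,\Delta}$ the misclassification probability is bounded trivially by $1$, but smoothness together with the margin condition will make the measure of $\partial_{p,\Delta}$ small. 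The parameters $\Delta$ and $k$ are then tuned to balance the two contributions.

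For the point-wise concentration, I would condition on $r=\|X_{(k+1)}-x\|$. By Lemma~10 of \cite{CD14}, the scaled distances $\|X_{(i)}-x\|/r$ for $i\le k$ are conditionally i.i.d.\ on $[0,1]$, and Assumption~(A.0) ensures $\phi$ evaluated at such a variable is sub-exponential. A Bernstein-type bound applied separately to the numerator $\sum_i\phi(\|X_{(i)}-x\|/r)y(X_{(i)})$ and the denominator $\sum_i\phi(\|X_{(i)}-x\|/r)$, combined through a standard ratio argument, should yield
\[
P(\widehat\eta(x)<1/2\mid r)\le C\exp(-ck\Delta^2)\qquad\text{for }x\in\mathcal{X}^+_{p,\Delta},\ r<v_p(x),
\]
with the symmetric bound on $\mathcal{X}^-_{p,\Delta}$. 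The excluded event $\{r\ge v_p(x)\}$ contributes at most $\exp(-c'k)$ via the Bernstein bound on $\sum_i\mathbf 1\{\|X_i-x\|\le v_p(x)\}$, exactly as in Claim~A.6 of \cite{belkin2018over-fitting}. Altogether $\sup_{x\in\mathcal{X}^\pm_{p,\Delta}}P(\widehat g(x)\ne g(x))\le C\exp(-ck\Delta^2)$.

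For $\mu(\partial_{p,\Delta})$, the definition of the partition combined with smoothness (A.2) forces $|\eta(x)-1/2|\le \Delta+Av_p(x)^\alpha$ on $\partial_{p,\Delta}$: if, say, $\eta(x)>1/2$ but $\eta(B(x,r))<1/2+\Delta$ for some $r<v_p(x)$, then $\eta(x)\le\eta(B(x,r))+Ar^\alpha<1/2+\Delta+Av_p(x)^\alpha$. Assumption~(A.4) together with the quantile argument used in the proof of Theorem~\ref{wnn} (made uniform on the compact $\mathcal{X}$) gives $v_p(x)=O((k/n)^{1/d})$, so the margin condition (A.5) yields
\[
\mu(\partial_{p,\Delta})\le B\bigl(\Delta+C(k/n)^{\alpha/d}\bigr)^{\beta}.
\]
Combining, $P(\widehat g(X)\ne g(X))\le C\exp(-ck\Delta^2)+B(\Delta+C(k/n)^{\alpha/d})^{\beta}$. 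Choosing $\Delta=\sqrt{c^{-1}\log(1/\delta)/k}$ matches the exponential term to $\delta$, and then $k\asymp n^{2\alpha/(2\alpha+d)}(\log(1/\delta))^{d/(2\alpha+d)}$ equalises $\Delta$ with $(k/n)^{\alpha/d}\asymp(\log(1/\delta)/n)^{\alpha/(2\alpha+d)}$, producing (\ref{class1}).

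For the excess risk (\ref{class2}) I would refine the integrand on $\partial_{p,\Delta}$ using $|2\eta(x)-1|\le 2(\Delta+C(k/n)^{\alpha/d})$, so
\[
\int_{\partial_{p,\Delta}}|2\eta-1|\,P(\widehat g\ne g\mid X)\,d\mu\le 2B\bigl(\Delta+C(k/n)^{\alpha/d}\bigr)^{\beta+1},
\]
while $\int_{\mathcal{X}^\pm_{p,\Delta}}|2\eta-1|\,P(\widehat g\ne g\mid X)\,d\mu\le C\exp(-ck\Delta^2)$. With $k\asymp n^{2\alpha/(2\alpha+d)}$ and $\Delta$ of order $(k/n)^{\alpha/d}$ (inflated by a $\sqrt{\log n}$ factor, absorbed into the $O$, so that the exponential tail is negligible against any polynomial), the first piece is of the claimed order $n^{-\alpha(\beta+1)/(2\alpha+d)}$. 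The main obstacle throughout is the ratio-concentration step: because $\widehat\eta$ is a ratio of two $k$-sums that share the random normaliser $\|X_{(k+1)}-x\|$, one cannot apply a sum-of-i.i.d.\ Bernstein inequality to $\widehat\eta$ directly. Conditioning on $\|X_{(k+1)}-x\|$, using Lemma~10 of \cite{CD14} to recover conditional independence of the scaled distances, and then invoking Assumption~(A.0) to obtain sub-exponential tails for numerator and denominator is where the delicate work lies.
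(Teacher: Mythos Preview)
Your argument for (\ref{class1}) matches the paper's: partition into $\partial_{p,\Delta}$ and $\mathcal X^{\pm}_{p,\Delta}$, bound the boundary via smoothness plus the margin condition, condition on $\|X_{(k+1)}-x\|$ and invoke Lemma~10 of \cite{CD14} to recover i.i.d.\ sub-exponential summands to which Bernstein applies, then tune $\Delta$ and $k$. The paper handles the ratio by rewriting $\{\widehat\eta(x)<1/2\}$ as $\{\sum_i Z_i>0\}$ with $Z_i=\phi_i(Y(X_{(i)})-1/2)$, which is just a clean implementation of your ``ratio argument''.

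For (\ref{class2}), however, your single-threshold decomposition loses a logarithmic factor and does not yield the stated rate. With a fixed $\Delta$ you need $k\Delta^2\to\infty$ to kill the term $\exp(-ck\Delta^2)$; since $k\asymp n^{2\alpha/(2\alpha+d)}$ gives $k\cdot(k/n)^{2\alpha/d}\asymp 1$, you are forced to take $\Delta\asymp(k/n)^{\alpha/d}\sqrt{\log n}$, and then the boundary contribution $(\Delta+C(k/n)^{\alpha/d})^{\beta+1}$ carries an extra $(\log n)^{(\beta+1)/2}$ which is \emph{not} absorbed by the $O(\cdot)$. The paper removes the log via a peeling argument: it works with the pointwise margin $\Delta(x)=|\eta(x)-1/2|$, shows that for $\Delta(x)>\Delta_0:=O((k/n)^{\alpha/d})$ one has $\mathbb E R_{n,k}(x)-R^*(x)\le 2\Delta(x)\exp\bigl(-ck(\Delta(x)-\Delta_0)^2\bigr)$, and then slices $\{\Delta(x)>\Delta_0\}$ into dyadic shells $\Delta_i=2^i\Delta_0$. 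Using (A.5) to bound $P(\Delta(X)\le\Delta_{i+1})\le C\Delta_{i+1}^\beta$ and choosing $i_0$ so that successive shell contributions decay geometrically, the sum over $i\ge i_0$ is $O(\Delta_{i_0}^{\beta+1})$ without any log. This multi-scale treatment exploits the trade-off that large margins give sharper concentration while small margins carry small weight $|2\eta-1|$; your fixed-$\Delta$ split cannot capture this and is precisely where the gap lies.
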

	We remark that the above bounds in (\ref{class1}) and (\ref{class2}) are the same as those in \cite{CD14}.
	
	\begin{proof}[Proof]
		Let $p=2k/n$, following \cite{belkin2018over-fitting, CD14}, we obtain that for any fixed $\Delta\in(0,1)$,
		\begin{eqnarray*}
			P(g(X)\neq \widehat{g}(X))
			&\leq& \mu(\partial_{2p,\Delta})+P(E)\\
			&&+P\bigg( \sum_{i=1}^kW_i(Y(X_{(i)})-1/2)>0 \bigg| X\in\mathcal{X}_{2p,\Delta}^-{\rm and } E^c \bigg)\\
			&&+P\bigg( \sum_{i=1}^kW_i(Y(X_{(i)})-1/2)<0 \bigg| X\in\mathcal{X}_{2p,\Delta}^+{\rm and } E^c\bigg )\\
			&\leq& B\bigg(\Delta+AC_1\bigg(\frac{k}{n}\bigg)^{\alpha/d}\bigg)^{\beta}+B\exp(-k/4)\\
			&&+P\bigg( \sum_{i=1}^kW_i(Y(X_{(i)})-1/2)>0 \bigg| X\in\mathcal{X}_{2p,\Delta}^- {\rm and } E^c\bigg)\\
			&&+P\bigg( \sum_{i=1}^kW_i(Y(X_{(i)})-1/2)<0 \bigg| X\in\mathcal{X}_{2p,\Delta}^+ {\rm and } E^c\bigg),
		\end{eqnarray*}
		holds for some constants $A$, $B$ and $C_1$, where $r_{i}=\|X_{(i)}-X\|$, ${\rm E}=\{r_{k+1}>v_{2p}(X)\}$.
		Denote $Z_i(x)=\phi(\|X_i-x\|/r_{k+1})(Y(X_i)-1/2)$ for those $k$ nearest $X_i$'s.  By Assumption (A.0) of $\phi$ and the arguments in Theorem 5 of
		\cite{CD14}, given any fixed $x$, conditional on $r_{k+1}$, $Z_i(x)$ are iid sub-exponential variables. 
		Hence by Bernstein inequality and compact $\mathcal{X}$ assumption, for some $C_2, C_3, C_4>0$, conditional on $X\in\mathcal{X}_{2p,\Delta}^-$ and $E^c$,
		\begin{equation}\label{bern}
		\begin{split}
		P\bigg( \sum_{i=1}^kW_i(Y(X_{(i)})-1/2)>0\bigg)
		=P\bigg(\sum_{i=1}^k Z_i-\mathbb{E}Z_i>-k\mathbb{E}Z_1\bigg)
		\leq C_2\exp(-C_4 k\Delta^2 ).
		\end{split}
		\end{equation}
		The way to employ Bernstein inequality is postponed to appendix.
		
		Let $\delta:=2C_2\exp(-C_4 k\Delta^2)$, then replacing $\Delta$ with $\delta$, we have
		\begin{eqnarray*}
			P(g(X)\neq \widehat{g}(X))\leq O\bigg( \sqrt{\frac{\log 1/\delta}{k}}+\bigg(\frac{k}{n}\bigg)^{\alpha/d} \bigg)^{\beta} + B\exp(-c_3k)+\delta,
		\end{eqnarray*}
		then the proof of (\ref{class1}) is completed by taking $k\asymp(n^{2\alpha/(2\alpha+d)}(\log\delta)^{d/(2\alpha+d)})$.
		
		To prove (\ref{class2}), we follow the proof of \cite{CD14}. Without loss of generality assume $\eta(x)<1/2$. Define
		\begin{eqnarray*}
			\Delta_0&=& v_{2p}^\alpha =O(k/n)^{\alpha/d},\\
			\Delta(x)&=&|\eta(x)-1/2|,
		\end{eqnarray*}
		so that $\eta(B(x,r))\leq \Delta_0+\eta(x)=1/2-(\Delta(x)-\Delta_0)$ for $r<v_{2p}$, and $x$ is not in $\partial_{p,\Delta(x)-\Delta_0}$.
		
		To calculate $\mathbb{E}R_{n,k}(x)-R^*(x)$, from definition, we obtain
		\begin{eqnarray}\label{eqn:regret}
		\mathbb{E}R_{n,k}(x)-R^*(x)=|1-2\eta(x)|P(\widehat{g}(x)\neq g(x)).
		\end{eqnarray}
		In addition, from definition of $\partial_{p,\Delta}$, $\mathcal{X}_{p,\Delta}^+$, and $\mathcal{X}_{p,\Delta}^-$, if $x$ is not in $\partial_{p,\Delta}$, we also have
		\begin{eqnarray}\label{eqn:prob}
		P(\widehat{g}(x)\neq g(x))\leq P(r_{(k+1)}\geq v_{2p})+P(|\widehat{\eta}(x)-\eta(B(x,r_{(k+1)}))|\geq \Delta).
		\end{eqnarray}
		
		The derivations of (\ref{eqn:regret}) and (\ref{eqn:prob}) are postponed in appendix.
		
		As a result, when $\Delta(x)>\Delta_0$,
		\begin{equation*}
		\begin{split}
		\mathbb{E}R_{n,k}(x)-R^*(x)&\leq 2\Delta(x) \bigg[P(r_{(k+1)}>v_{2p})\\
		&+P\bigg( \sum_{i=1}^kW_i(Y(X_{(i)})-\eta(B(x,r_{(k+1)})))>\Delta(x)-\Delta_0  \bigg)\bigg].
		\end{split}
		\end{equation*}
		Applying Bernstein inequality (similarly to (\ref{bern})), we obtain
		\begin{equation}\label{eqn:berstein}
		\mathbb{E}R_{n,k}(x)-R^*(x)\leq \exp(-C_5k)+2\Delta(x)\exp(-c_4 k(\Delta(x)-\Delta_0)^2).  
		\end{equation}
		The details to derive (\ref{eqn:berstein}) are postponed to appendix.
		
		Define $\Delta_i=2^i\Delta_0$, for any $i_0=1,...$, the excess risk can be bounded by
		\begin{eqnarray*}
			&&\mathbb{E}R_{n,k}(X)-R^*(X)\\
			&=&\mathbb{E}(R_{n,k}(X)-R^*(X))1_{\{\Delta(X)\leq\Delta_{i_0}\}}+\mathbb{E}(R_{n,k}(X)-R^*(X))1_{\{\Delta(X)>\Delta_{i_0}\}}\\
			&\leq& \mathbb{E}(2\Delta(X)1_{\{\Delta(X)\leq \Delta_{i_0}\}})+\exp(-C_5k)
			+4\mathbb{E}[\Delta(x)\exp(-k(\Delta(x)-\Delta_0)^2)1_{\{\Delta(X)> \Delta_{i_0}\}}]\\
			&\leq& 2\Delta_{i_0} P(\Delta(X)\leq \Delta_{i_0})+\exp(-C_5k)+4\mathbb{E}[\Delta(x)\exp(-k(\Delta(x)-\Delta_0)^2)1_{\{\Delta(X)> \Delta_{i_0}\}}],
		\end{eqnarray*}	
		while
		\begin{eqnarray*}
			&&\mathbb{E}[\Delta(x)\exp(-k(\Delta(x)-\Delta_0)^2)1_{\{\Delta_i<\Delta(X)\leq \Delta_{i+1}\}}]\\
			&\leq&\Delta_{i+1}\exp(-C_6k(\Delta_i-\Delta_0)^2)P(\Delta(X)\leq \Delta_{i+1})\\
			&\leq& C_7\Delta^{\beta+1}_{i+1}\exp(-C_6k(\Delta_i-\Delta_0)^2).
		\end{eqnarray*}	
		Taking \begin{equation*}
		i_0=\max\bigg(1,\left\lceil  \log_2\sqrt{  \frac{2\beta+4}{k\Delta_0^2}}\right\rceil\bigg),
		\end{equation*}
		then for $i>i_0$, we have
		\begin{equation}\label{fail}
		\frac{\Delta_{i+1}^{\beta+1}\exp(-C_6k(\Delta_i-\Delta_0)^2)}{\Delta_{i}^{\beta+1}\exp(-C_6k(\Delta_{i-1}-\Delta_0)^2)}\leq 1/2.
		\end{equation}
		Therefore, the sum of the excess risk for $i>i_0$ can be bounded, where
		\begin{eqnarray*}
			\mathbb{E}[\Delta(x)\exp(-k(\Delta(x)-\Delta_0)^2)1_{\{\Delta(X)> \Delta_{i_0}\}}]
			&\leq& \sum_{i\geq i_0}C_7\Delta_{i+1}^{\beta+1}\exp(-C_6k(\Delta_i-\Delta_0)^2)\\
			&\leq&O(\Delta_{i_0}^{\beta+1}).
		\end{eqnarray*}	
		Finally, recall the definition of $i_0$, we have
		\begin{eqnarray*}
			\mathbb{E}R_{n,k}(X)-R^*(X)\leq \max\bigg(O\bigg(\frac{k}{n}\bigg)^{\frac{\alpha(\beta+1)}{d}},O\bigg( \frac{1}{k} \bigg)^{\frac{\beta+1}{2}}\bigg).
		\end{eqnarray*}
		The proof is completed after taking $k\asymp(n^{2\alpha/(2\alpha+d)})$.
	\end{proof}
	
	\begin{remark}
		It is worth to mention that, unlike Theorem \ref{wnn} and Corollary \ref{coro} which are asymptotic results, Theorem \ref{class} presents an non-asymptotic bound for misclassification rate and excessive risk.
	\end{remark}
	
	\begin{remark}
		Comparing with the choice $\phi(t)=t^{-\kappa}$ proposed by \cite{belkin2018over-fitting}, our choice (e.g., $\phi(t)=1-\log(t)$) leads to a sharp bound for classification error rate. Technically, this is due to the fact that slowly increasing $\phi(\cdot)$ allows us to use exponential concentration inequality.
	\end{remark}
	
	\section{Numerical Experiments}\label{sec:num}
	In the section, we demonstrate several numerical studies to compare the performance of interpolated-NN with $\phi(t)=1-2\log(t)$ and traditional $k$-NN, for both regression and classification problems.
	\subsection{Regression}\label{knn_exp}
	We have two simulation setups. In the first setup, $X$ follows $\text{Unif}[-3,3]^{10}$. For $y$, we have
	\begin{eqnarray}
	y&=&\psi(x)+\epsilon,\\
	\psi(x)&=& \frac{N(x,1,I_{10})}{N(x,1,I_{10})+N(x,0,I_{10})},
	\end{eqnarray} 
	where $\epsilon\sim t(\text{df}=5)$ and $N(x,\mu,\sigma)$ represents the density function of $N(\mu,\sigma)$ at $x$. 
	
	For each pair of $(k,n)$, we sampled $1000$ testing data points to estimate MSE, and repeated 30 times. For each $n$, we tried $k=1,...,n/2$. Based on the average of the 30 repetitions, we selected the minimum average MSE over the choices of $k$ and recorded its corresponding square bias. From Figure \ref{knn_wnn}, for both MSE and bias, the  interpolated-NN and $k$-NN share the same rate (the decreasing slopes for both are similar), but the former is constantly better.
	
	\begin{figure}
		\begin{center}
			\includegraphics[scale=0.7]{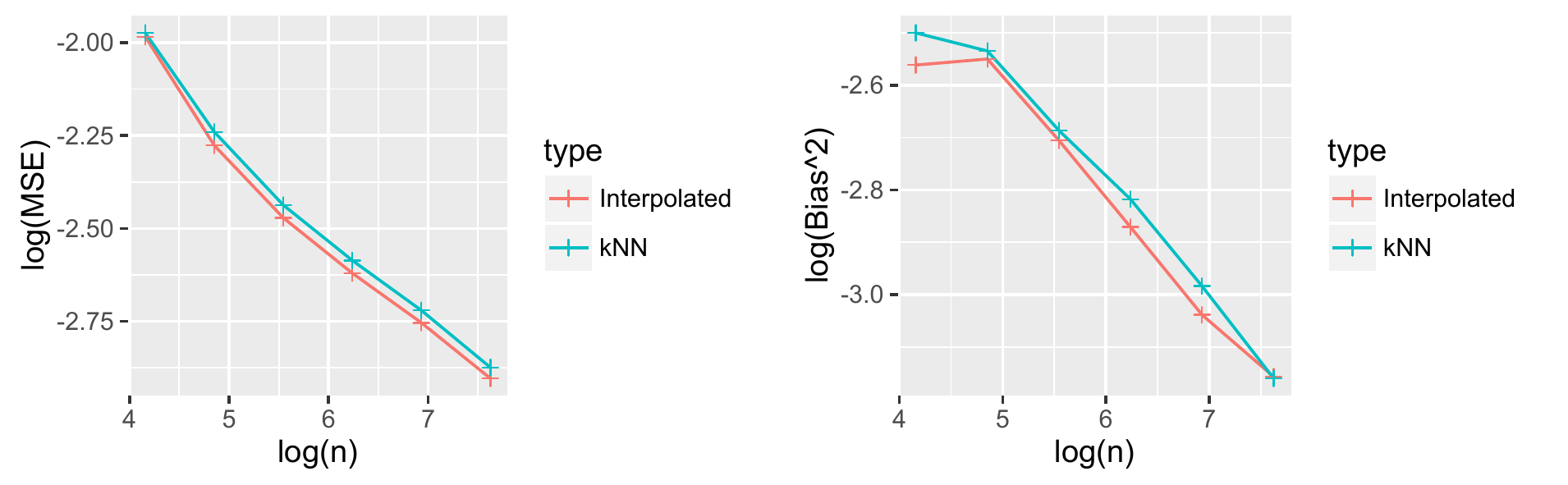}
		\end{center}
		\caption{Optimal MSE and the Corresponding Bias for Regression Model 1}\label{knn_wnn}
	\end{figure}

	In the second experiment, $X$ follows $N(0,I_5)$. The response $y$ is defined as
	\begin{equation*}
	y=(\sum_{i=1}^5x_i)^2+\epsilon,
	\end{equation*}
	with $\epsilon\sim N(0,1)$. The phenomenon demonstrated in Figure \ref{knn_wnn2} is similar as that in Figure \ref{knn_wnn}, although the support of $X$ is not compact in this setup.
	\begin{figure}
		\begin{center}
			\includegraphics[scale=0.7]{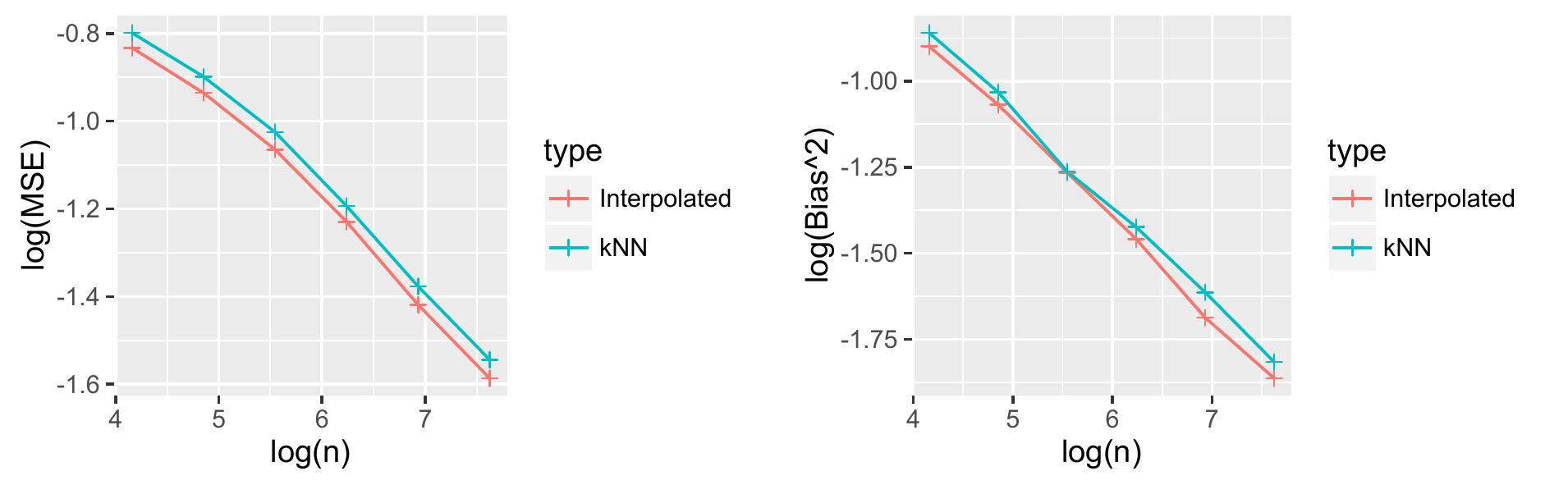}
		\end{center}
		\caption{Optimal MSE and the Corresponding Bias for Regression Model 2}\label{knn_wnn2}
	\end{figure}
	
	It is worth to mention, in both simulation, the optimal $k$ values selected by interpolated-NN and $k$-NN doesn't have much difference.
	
	\subsection{Classification}
	For the simulation setup of classification problem, we consider that the two classes follow $N(0,I_5)$ and $N(\gamma\textbf{1},I_5)$ with $\gamma$ = 0.1, 0.2, 0.5, 0.7, 1.0, 1.5. Training and testing samples are both generated from the mixture of these two classes with equal probability. Such an equal probability mixture represents the worst (most difficult) scenario. For each pair of $n$ and $\gamma$, we tried $k=1,...,n/2$ for 30 times with 1000 test samples, and use the excess risk to determine the optimal $k$. From Figure \ref{class_fig}, we observe that the interpolated-NN always has a smaller classification error (with a similar trend, though) than $k$-NN in most pairs of $n$ and $\gamma$. Moreover, we record the optimal $k$ for each $n$. As shown in Figure \ref{k_fig}, interpolated-NN and $k$-NN have a similar pattern on the choice of optimal $k$ across different settings of $n$ and $\gamma$.
	
	\begin{figure}[h]
		\begin{center}
			\includegraphics[scale=0.75]{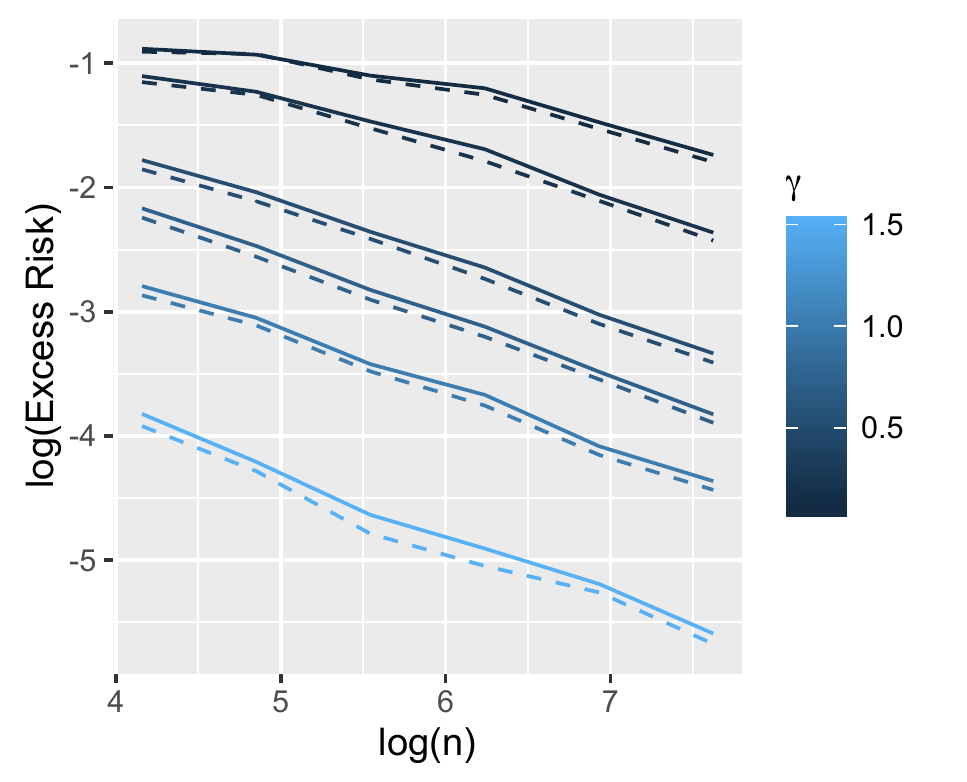}
		\end{center}
		\caption{Optimal Excess Risk for Classification Model, Solid Line for $k$-NN, Dashed Line for interpolated-NN}\label{class_fig}
	\end{figure}

	\begin{figure}[h]
		\begin{center}
			\includegraphics[scale=0.73]{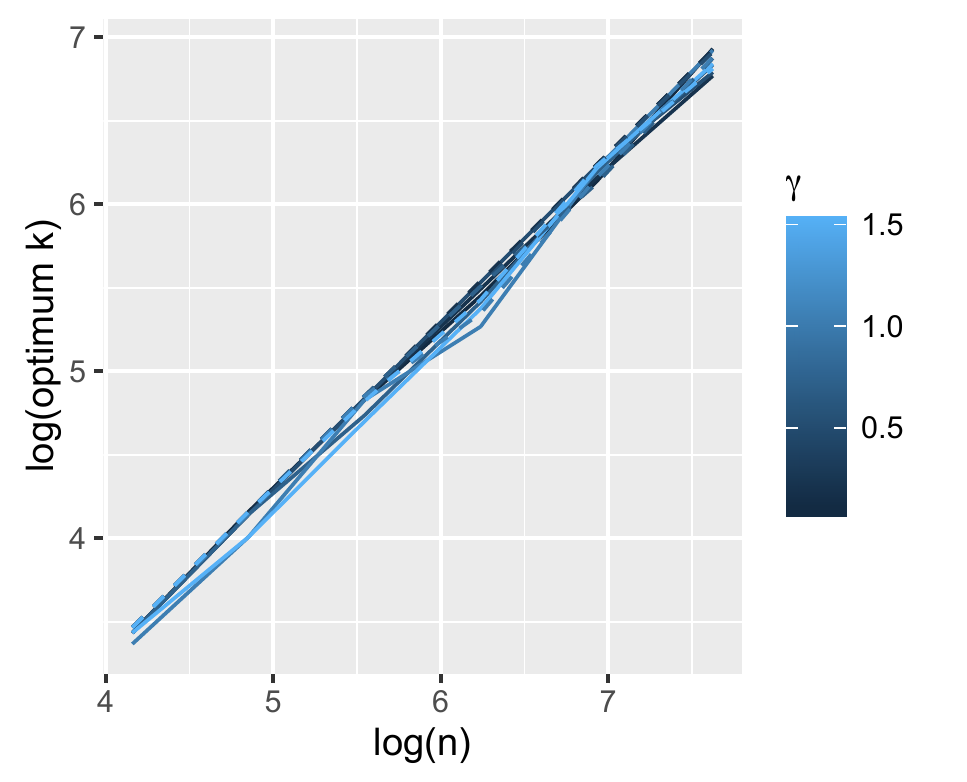}
		\end{center}
		\caption{Optimal $k$ for Classification Model}\label{k_fig}
	\end{figure}
	
	\begin{remark}
		The simulation results presented in above two sections clearly show that interpolated-NN and $k$-NN share the same order of convergence rate, as their decreasing treads are parallel. However, the interpolated-NN is always better than the $k$-NN. This observation strongly suggests that the convergence rate of interpolated-NN has a smaller multiplicative constant than the $k$-NN.
	\end{remark}
	
	\subsection{Real Data}
	In this section, we examine the empirical performance of the interpolated-NN in real data. The data HTRU2 is from \cite{lyon2016fifty} with sample size around $18,000$ and $8$ continuous attributes. The dataset is first normalized, and 2,000
	randomly selected samples are reserved for testing.

	Instead of comparing regression performance through MSE, we compare the classification error on testing data in this real data example since the true Bayes risk is unknown. It can be seen from Figure \ref{real1} that the interpolated-NN is always better than $k$-NN regardless of the choice of $k$.
	
	\begin{figure}
		\centering
		\includegraphics[scale=0.8]{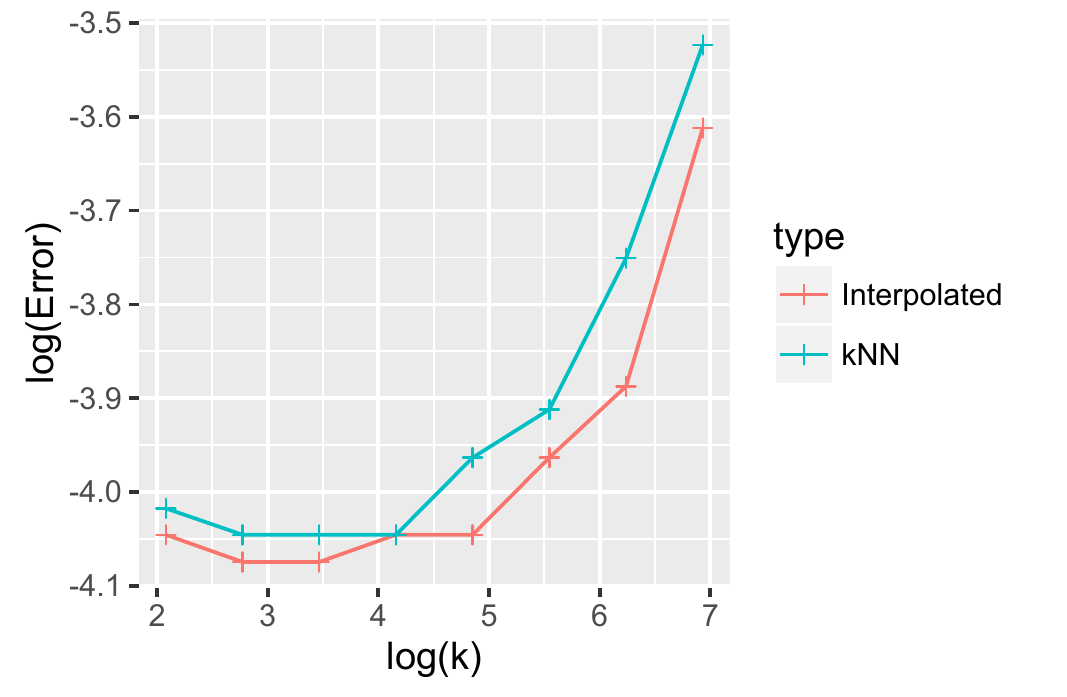}
		\caption{Performance in HTRU2}
		\label{real1}
	\end{figure}

	\section{Conclusion and Discussion}
	In this paper, we firstly provide some insights on why sometimes an over-fitting weighting scheme can even beat traditional $k$-NN: the interpolated weights greatly reduce the estimation bias comparing with tradition $k$-NN. We then show the optimal convergence rates of interpolated-NN for both regression and classification. Even though the weighting scheme causes over-fitting, interpolated-NN can still obtain the optimal rate, as long as the weights are carefully designed.
	
	In the end, we would like to point out a few promising future directions: firstly, as we mentioned, in most cases of our simulations, the interpolated-NN performs better than $k$-NN. This motivates us to conjecture that the interpolated-NN may have a smaller multiplicative constant in convergence rate, which deserves further investigation. 
	
	Our simulations indicate that the interpolated-NN performs very well even when the compactness assumption of $\mathcal{X}$ is violated. Therefore, it will be of interests to extend our current work to  unbounded $\mathcal{X}$ case. Especially, we notice that in \cite{doring2017rate}, the compact assumption on $\mathcal{X}$ can be relaxed for traditional $k$-NN algorithm. Similar results may hold for interpolated-NN as well.
	
	Finally, our results are based on $\mathcal{L}_2$ metric. It is not hard to generalize it to $\mathcal{L}_p$, but it remains to study whether the statistical optimality is still valid for some more general metric spaces, such as Riemannian manifolds.

	\bibliographystyle{asa}
	\bibliography{VaRHDIS}
	\appendix

		%
		
		%
		
		\newpage
		\section{Toy Examples}
		Several toy examples are conducted to demonstrate nearest neighbor algorithm with interpolated weights.
		
		In these examples, we take 30 one dimensional training samples $x_i=-5,-4,...,25$, and generate three choices of response (1) $y=0*x+\epsilon$, (2) $y=x^2+0*\epsilon$, and (3) $y=(x-10)^2/8+5*\epsilon$ where $\epsilon\sim N(0,1)$. In other words, the mean function $\eta(x)$ are (1) $\eta(x)\equiv 0$, (2) $\eta(x)=x^2$, and (3) $\eta(x)=(x-10)^2/8$. The number of neighbors $k$ equals 10. 
		
		Three different weighting schemes are considered (1) $\phi(t)=1-\log(t)$, (2) $\phi(t)=1/k$, and (3) $\phi(t)=t^{-1}$.
		Note that the first and third choices are interpolated weight, and the second choice is simply the traditional $k$-NN. In particular, the first $\phi(\cdot)$ satisfies Assumption (A.0). The four line "true" refers to $\eta(x)$.
		
		Based on this setting, we plot the regression estimator $\widehat{\eta}(x)$ in Figure \ref{toy}. In order to remove the boundary effect, we only plot the  $\widehat{\eta}(x)$ within range $(0,20)$. Note that for the second case, in order to make the difference more clear and recognizable, we only plot $\widehat\eta(\cdot)$ for $x$ between 10 and 15.
		
		\begin{figure}
			\begin{center}
				
				\includegraphics[scale=1]{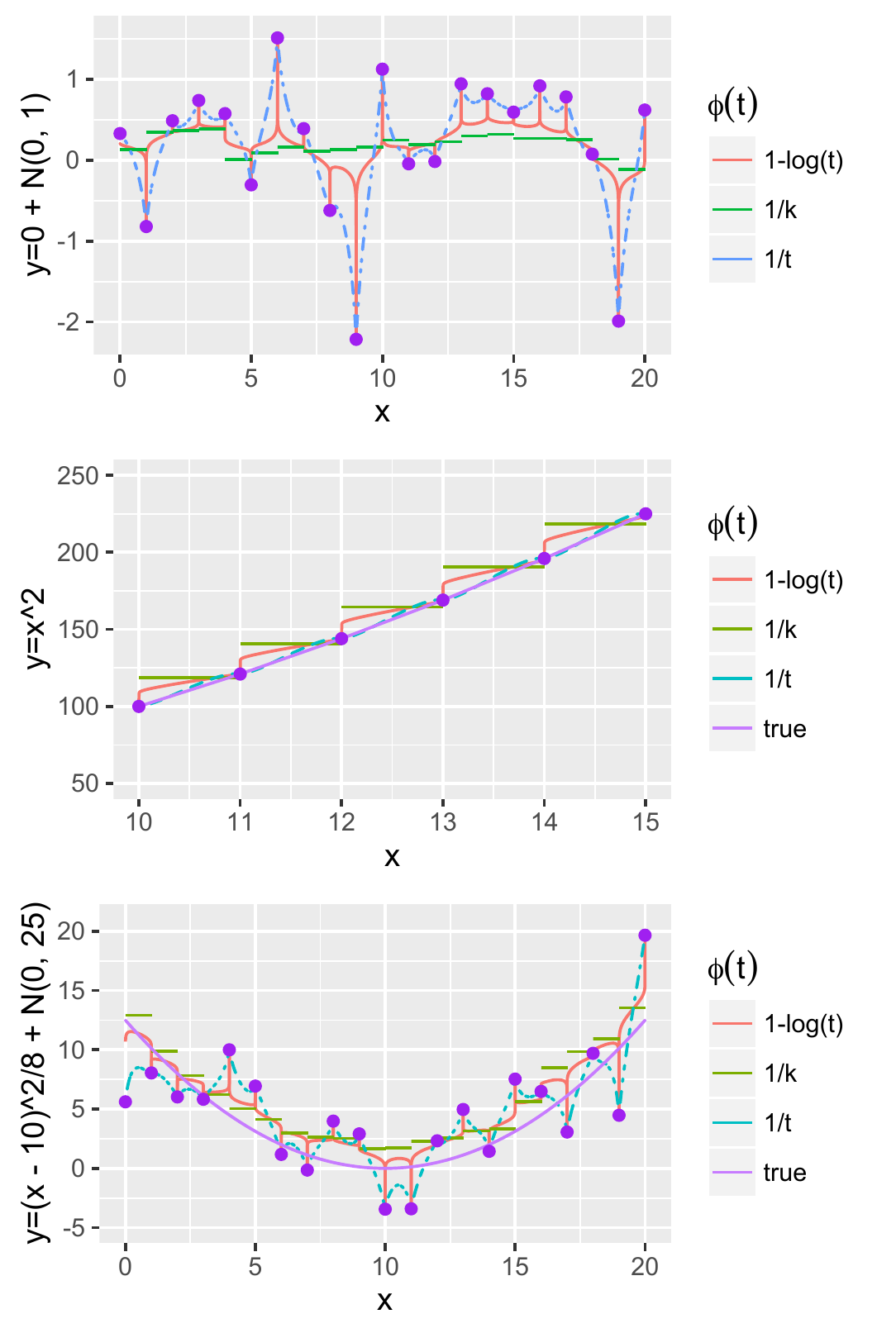}
				\caption{Three Trials on the Toy Simulation: Upper: $\eta\equiv0$; Middle:  $\eta(x)=x^2$;
					Lower: $\eta(x)=(x-10)^{2}/8$.}
				\label{toy}
			\end{center}
		\end{figure}
		
		There are several insights we can obtain from the results of this toy example.
		
		First of all, interpolated weight does ensure data interpolation. As $x$ gets closer to the some observed $x_i$, the estimator $\widehat\eta(x)$ is forced towards $y_i$. As a consequence, $\widehat\eta(x)$ is spiky for interpolated-NN. On contrast, the $k$-NN estimator is much more smooth.
		
		Secondly, different weighting schemes lead to different balance between bias and variance of $\widehat\eta$.
		In the first setting that $\eta\equiv 0$, any weight-NN algorithm is unbiased, hence it corresponds to the extreme situation that bias is 0;  the second model is noiseless, hence corresponds to the opposite extreme situation that variance is 0. 
		In the no-bias setting, $k$-NN performs the best, and interpolated-NN estimators fluctuate a lot. In the noiseless case, $k$-NN has the largest bias, and $\phi(t)=t^{-1}$ leads to smallest bias. These observations are consistent to our arguments in Section 3.1 in the main text, i.e., $k$-NN tries to minimize the variance of nearest neighbor estimator as much as possible, while interpolated-NN tries to minimize the estimation bias. 
		For the comparison between different interpolated weighting scheme, we comment that the faster $\phi(t)$ increases to infinite as $t\rightarrow0$, the smaller bias it will yield. Thus $\phi(t)=t^{-1}$ leads to smaller bias then $\phi(t)=1-\log(t)$, at the expense of larger estimation variance.
		
		For the third model $\eta(x)=(x-10)^2/8$, it involves both noise and bias. From Figure \ref{toy}, the estimation of $\phi(t)=1/k$ is always above the true line, i.e., high bias, due to the convexity of the $\eta$. For interpolated-NN, their estimators are more rugged, but fluctuate along the true $\eta(x)$. For this case, it is difficult to claim which one is the best, but the trade-off phenomenon between bias and variance is still clear to see.
		
		In conclusion, a fast increasing $\phi$ leads to smaller bias and larger variance, and non-interpolated weights such as $k$-NN leads to larger bias but smaller variance.
		
		\section{Proof of Lemma 1}

		\begin{lemma}\label{lemma}
			Given any $x\in\mathcal{X}$, let  $p=k/n$ (thus $\widehat{v}_{p}(x)=\|X_{(k)}-x\|$), and 
			$$A_2(x)=F_x(\widehat{v}_{p}(x))+\widehat{F}_x(v_{p}(x))-\widehat{F}_x(\widehat{v}_{p}(x))-F_x(v_{p}(x)).$$
			Under Assumption (A.4), if $n|\widehat{v}_{p}(x)-v_{p}(x)|$ and $pn\rightarrow\infty$, and $v_{p}(x)=O(p^{1/d})$ with $n^{1/2}p^{1/d-1/2}\rightarrow\infty$, $f(v_{p}(x))=O(p^{1-1/d})$, then
			$$\mathbb{E}A_2^2(x)\rightarrow O(p^{1/2}/n^{3/2}). $$
		\end{lemma}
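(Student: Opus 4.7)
The plan is to recognize $A_2$ as the increment of the empirical process $G_n(t) := \widehat{F}_x(t) - F_x(t)$ between the true and empirical quantiles, namely
$$A_2(x) = -\big[G_n(\widehat{v}_p(x)) - G_n(v_p(x))\big].$$
This is precisely the Bahadur--Kiefer remainder of the quantile expansion used in the proof of Theorem \ref{wnn}, whose natural scale is $n^{-3/4}$ for central quantiles; the sharper rate $p^{1/2}/n^{3/2}$ comes from tracking the dependence on the local density $f_x(v_p)$.

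First I would pin down the fluctuation of $\widehat{v}_p$ itself. By the inversion identity $\{\widehat{v}_p > v_p + t\} = \{\widehat{F}_x(v_p + t) < p\}$ combined with Bernstein's inequality applied to the iid Bernoulli indicators $\mathbf{1}\{r_i \leq v_p + t\}$ and the smoothness of $F_x$ guaranteed by (A.4), one obtains a tail estimate of the form $P(|\widehat{v}_p - v_p| > \tau) \leq 2\exp(-c\, n f_x(v_p)^2 \tau^2/p)$ for $\tau$ in a suitable range. Using $f_x(v_p)=O(p^{1-1/d})$ and $v_p=O(p^{1/d})$, the choice $\tau_n \asymp p^{1/d-1/2}\sqrt{\log n}/\sqrt{n}$ makes the right-hand side a negligible $n^{-C}$ for any fixed $C$.

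Next I would decompose
$$\mathbb{E}A_2^2 \;\leq\; \mathbb{E}\big[A_2^2\,\mathbf{1}_{E_n}\big] + 4\,P(E_n^c),$$
where $E_n = \{|\widehat{v}_p - v_p|\leq \tau_n\}$ and I have used the deterministic bound $|A_2|\leq 2$. The second term is already negligible by Step~2. On $E_n$, I would dominate $A_2^2$ by the sup-increment of $G_n$ on a short interval and apply a Bernstein/Talagrand-type maximal inequality: since each increment $G_n(t) - G_n(v_p)$ is a centered sum of iid Bernoulli variables with per-sample variance at most $|F_x(t) - F_x(v_p)| \lesssim f_x(v_p)|t - v_p|$, a dyadic chaining argument on $[v_p - \tau_n, v_p + \tau_n]$ yields
$$\mathbb{E}\sup_{|t - v_p|\leq \tau_n}\big(G_n(t) - G_n(v_p)\big)^2 \;\lesssim\; \frac{f_x(v_p)\,\tau_n}{n}.$$
Multiplying the orders together gives $f_x(v_p)\tau_n/n \asymp p^{1-1/d}\cdot p^{1/d-1/2}/n^{3/2} = p^{1/2}/n^{3/2}$, with logarithmic factors absorbed into the $O(\cdot)$ notation, matching the claim.

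The main obstacle is the uniform oscillation bound in the last step. Handling $\mathbb{E}(G_n(\widehat{v}_p) - G_n(v_p))^2$ directly is delicate because $\widehat{v}_p$ is data-dependent and correlated with the very process $G_n$ it is being fed into, so one cannot simply treat $\widehat{v}_p$ as a deterministic point. Passing to the supremum over a short interval circumvents this but requires a careful entropy/chaining estimate to ensure the bound scales with $f_x(v_p)\tau_n$ rather than merely $\tau_n$. An alternative route is to condition on the order statistic $r_{(k)}=\widehat{v}_p$ and exploit the resulting conditional uniformity of the remaining $r_{(i)}$'s over $[0,\widehat{v}_p]$, but the sup-of-empirical-process route appears cleaner and aligns with the tools of \cite{sun2009general,sun2010asymptotic} already invoked in the paper.
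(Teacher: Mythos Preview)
Your approach is correct and follows the same two--step strategy as the paper: first localize $\widehat v_p$ in a window of width $\tau$ around $v_p$, then bound the oscillation of the centered empirical process over that window to get the $f_x(v_p)\,\tau/n$ variance scale. The implementations differ in the concentration tools. The paper localizes via high moments, showing $\mathbb{E}[F(\widehat v_p)-F(v_p)]^{2m}=O((p/n)^m)$ for every $m$ and then converting this into $P(|\widehat v_p-v_p|>c)=O\big((p/n)^m c^{-2m}f^{-2m}\big)$; by letting $m\to\infty$ the tail is killed without a logarithm, which is why the paper lands exactly on $O(p^{1/2}/n^{3/2})$. Your Bernstein route is cleaner but forces the choice $\tau_n\asymp p^{1/d-1/2}\sqrt{\log n}/\sqrt n$, so as written you obtain $O(p^{1/2}\sqrt{\log n}/n^{3/2})$ rather than the stated bound; the extra logarithm is immaterial for how the lemma is used in Theorem~\ref{wnn}, and can be removed by a dyadic shelling over $\tau_j=2^j\tau_0$ with $\tau_0\asymp p^{1/d-1/2}/\sqrt n$, summing the shell contributions against the Bernstein tail. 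Conversely, your treatment of the data--dependent endpoint is more careful than the paper's: the paper writes $\mathbb{E}A_2(\widehat v_p,v_p)^2\le \mathbb{E}A_2^2(v_p+c,v_p)+P(|\widehat v_p-v_p|>c)$ as if the fixed endpoint $v_p+c$ dominated the random one, whereas your passage to $\sup_{|t-v_p|\le\tau_n}|G_n(t)-G_n(v_p)|$ together with a Doob--type maximal inequality for the independent--increment process $t\mapsto \sum_i\mathbf{1}\{v_p<r_i\le t\}-n(F(t)-F(v_p))$ makes this step rigorous at no cost in rate.
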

		\begin{proof}[Proof]
			For simplicity, we write $F_x$ as $F$, $f_x$ as $f$, $v_p(x)$ as $v_p$ and $\widehat{v}_p(x)$ as $\widehat{v}_p$ in this proof. Denote $$A_2(a,b)=F(a)-\widehat{F}(a)-(F(b)-\widehat{F}(b)),$$ then for any $a,b\in\mathbb{R}^d$ with $|a-b|$ is larger than order $O(1/n)$, under Assumption (A.4), it follows that
			\begin{eqnarray}\label{lemma_order}
			\mathbb{E}A_2^{2m}(a,b)=O\bigg( \frac{(a-b)^m(f(a)+f(b))^m}{n^m}\bigg),
			\end{eqnarray}
			for any positive integer $m$. 
			
			To illustrate (\ref{lemma_order}), for example, when $m=4$, after expanding all the terms, the terms involving odd moments will be zero. The non-zero terms left are 
			\begin{equation}
			\label{haha1}\bigg[\mathbb{E}1_{\{ \|X-x\|\in(a,b) \}}-(F(a)-F(b)))^2\bigg]^2
			\end{equation}
			and 
			\begin{equation}\label{haha2}
			\mathbb{E}\bigg[1_{\{ \|X-x\|\in(a,b) \}}-(F(a)-F(b))\bigg]^4.
			\end{equation}
			For (\ref{haha1}), there are $O(n^2)$ terms with this form, while for (\ref{haha2}), there are only $O(n)$ terms. Since $a-b$ is larger than $O(1/n)$, the dominant part becomes (\ref{haha1}). Therefore we can obtain (\ref{lemma_order}) through further approximating $F(a)-F(b)$ as $[(a-b)(f(a)+f(b))]/2$. 
			
			Since $\widehat{F}(\widehat{v}_{p})=F(v_{p})=p$, we have
			\begin{eqnarray*}
				\widehat{F}(v_{p})-\widehat{F}(\widehat{v}_{p})=\widehat{F}(v_{p})-F(v_{p}),
			\end{eqnarray*}
			hence if $|\widehat{v}_{p}-v_{p}|$ and $p$ is larger than $O(1/n)$,
			\begin{eqnarray*}
				\mathbb{E}\big[\widehat{F}(v_{p})-\widehat{F}(\widehat{v}_{p})\big]^{2m}=\mathbb{E}\big[\widehat{F}(v_{p})-F(v_{p})\big]^{2m}= O\bigg(\frac{p^m}{n^m}\bigg).
			\end{eqnarray*}
			As a result, for $a,b>0$, without loss of generality, assume $\widehat{v}_{p}>v_{p}$,
			\begin{eqnarray*}
				&&\mathbb{E}\big[ F(v_{p})-{F}(\widehat{v}_{p}) \big]^{2m}\\
				&\leq& P( \widehat{F}(\widehat{v}_{p})-\widehat{F}(v_{p})>b)+P(\widehat{F}(v_{p}+a)-\widehat{F}({v}_{p})<b)\\&&+\mathbb{E}\bigg[ (F(v_{p}+a)-{F}({v}_{p}))^{2m}1_{ \widehat{F}(\widehat{v}_{p})\leq\widehat{F}(v_{p}+a)  }\bigg]\\
				&\leq&  P( \widehat{F}(\widehat{v}_{p})-\widehat{F}(v_{p})>b)+P(\widehat{F}(v_{p}+a)-\widehat{F}({v}_{p})<b)+\mathbb{E}\big[ F(v_{p}+a)-{F}({v}_{p})\big]^{2m}\\
				&\leq& O\bigg( \frac{p^{m_2}}{b^{2m_2}n^{m_2}} \bigg)+O\big( a^{2m}(f(a+v_{p})+f(v_{p}))^{2m} \big)+P(\widehat{F}(v_{p}+a)-\widehat{F}({v}_{p})<b)\\
				&=& O\bigg( \frac{p^{m_2}}{b^{2m_2}n^{m_2}} \bigg)+O\big( a^{2m}(f(a+v_{p})+f(v_{p}))^{2m} \big)\\&&+P(F(a+v_{p})-F(v_{p})<b+A_2(v_{p},v_{p}+a))\\
				&\leq& O\bigg( \frac{p^{m_2}}{b^{2m_2}n^{m_2}} \bigg)+O\big( a^{2m}(f(a+v_{p})+f(v_{p}))^{2m} \big)+ O\bigg(\frac{b^{2m_1}}{a^{2m_1}(f(v_{p})+f(a+v_{p}))^{2m_1}}\bigg)\\&&+O\bigg(\frac{1}{a^{m_1}(f(v_{p})+f(a+v_{p}))^{m_1}}\frac{1}{n^{m_1}}\bigg).
			\end{eqnarray*}
			When $n\rightarrow\infty$, taking $m_2\gg m_1\gg m$, we obtain
			\begin{eqnarray*}
				\mathbb{E}\big[ F(v_{p})-{F}(\widehat{v}_{p}) \big]^{2m}\leq O\bigg(\frac{p^{m}}{n^{m}}\bigg).
			\end{eqnarray*}
			
			One can check that $a=o(v_{p})$, hence $f(v_{p})\approx f(v_{p}+a)$.
			
			As a result, $\widehat{v}_{p}$ falls in the interval of $v_{p}\pm c$ with probability tending to 1 for some $c>0$. We first assume that $c=o(v_{p})$ hence $f(v_{p}+c)=O(f(v_{p}))$. Rewrite $f=f(v_{p}+c)f_(v_{p})$ for simplicity, then
			\begin{eqnarray*}
				P(|\widehat{v}_{p}-v_{p}|>c)\leq O\bigg(\frac{p^{m}}{n^{m}}\frac{1}{c^{2m}f^{2m}}\bigg).
			\end{eqnarray*}
			Therefore,
			\begin{eqnarray*}
				\mathbb{E}A_2(\widehat{v}_{p},v_{p})^2\leq \mathbb{E}A_2^2(c+{v}_{p},v_{p})+O\bigg(\frac{p^{m}}{n^{m}}\frac{1}{c^{2m}f^{2m}}\bigg)
				=O\bigg(\frac{cf}{n} \bigg)+O\bigg(\frac{p^{m}}{n^{m}}\frac{1}{c^{2m}f^{2m}}\bigg).
			\end{eqnarray*}
			The optimal rate becomes $O(p^{1/2}/n^{3/2})$ when $n\gg m\rightarrow \infty$. Note that $cf/n=O(p^{1/2}/n^{3/2})$ implies $c=O(p^{1/d}p^{-1/2}/n^{-1/2})=o(p^{1/d})=o(v_{p})$. Since $p^{1/d-1/2}$ is of larger  than $O(n^{-1/2})$, $c$ is also larger than  $O(1/n)$. 
		\end{proof}
		\section{Some Detailed Proofs of Theorem 4}
		\subsection{Bernstein Concentration Inequality} Under assumption (A.0), for $\phi(\|X_{(1)}-x\|)$, 
		denote
		\begin{equation*}
		\mathbb{E}\exp\left(\lambda\phi(\|X_{(1)}-x\|-\lambda\mathbb{E}\phi(\|X_{(1)}-x\|)\right):=e^{\psi(\lambda)},
		\end{equation*}
		then we can use Taylor's expansion on $\psi(\cdot)$ and obtain
		\begin{equation*}
		\mathbb{E}\exp\left(\lambda\phi(\|X_{(1)}-x\|-\lambda\mathbb{E}\phi(\|X_{(1)}-x\|)\right)=e^{ \sigma^2\lambda^2/2+O(\lambda^3) }.
		\end{equation*}
		As a result, there exists ($b$,$\nu$) such that for any $|\lambda|<1/b$,
		\begin{equation*}
		\mathbb{E}\exp\left(\lambda\phi(\|X_{(1)}-x\|)\right)\leq e^{\nu^2\lambda^2/2}\exp\left(\lambda\mathbb{E}\phi(\|X_{(1)}-x\|)\right).
		\end{equation*}
		Therefore, one can adopt Bernstein inequality for sub-exponential random variable to obtain
		\begin{equation*}
		\mathbb{E}\left[\sum_{i=1}^k\phi(\|X_{(i)}-x\|)-k\mathbb{E}\phi(\|X_{(1)}-x\|)\geq t\right]\leq\begin{cases}
		e^{-t^2/2\nu^2k}\qquad& 0\leq t\leq k\nu^2/b\\
		e^{-t/2\nu k}\qquad& t>k\nu^2/b
		\end{cases}.
		\end{equation*}
		
		In Theorem 4, we use Bernstein inequality for some $Z_i$ instead of $\phi$. Since $Z_i$ preserves the finite MGF property of $\phi$, the concentration bound is valid with some other $(b,\nu)$.
		\subsection{Expression of Regret}
		To show
		\begin{eqnarray*}
			\mathbb{E}R_{n,k}(x)-R^*(x)=|1-2\eta(x)|P(\widehat{g}(x)\neq g(x)),
		\end{eqnarray*}
		assume $\eta(x)<1/2$, 
		\begin{eqnarray*}
			&&P(\widehat{g}(x)\neq Y|X=x)-\eta(x)\\&=& \eta(x)P(\widehat{g}(x)=0|X=x) + (1-\eta(x))P(\widehat{g}(x)=1|X=x) -\eta(x)\\
			&=&\eta(x)P(\widehat{g}(x)=g(x)|X=x)+ (1-\eta(x))P(\widehat{g}(x)\neq g(x)|X=x)-\eta(x)\\
			&=& \eta(x)-\eta(x)P(\widehat{g}(x)\neq g(x)|X=x)+ (1-\eta(x))P(\widehat{g}(x)\neq g(x)|X=x)-\eta(x)\\
			&=& (1-2\eta(x))P(\widehat{g}(x)\neq g(x)|X=x),
		\end{eqnarray*}
		similarly, when $\eta(x)>1/2$, we have
		\begin{eqnarray*}
			P(\widehat{g}(x)\neq Y|X=x)-1+\eta(x)= (2\eta(x)-1)P(\widehat{g}(x)\neq g(x)|X=x).
		\end{eqnarray*}	
		As a result, 
		\begin{eqnarray*}
			\mathbb{E}R_{n,k}(x)-R^*(x)=|1-2\eta(x)|P(\widehat{g}(x)\neq g(x)).
		\end{eqnarray*}
		\subsection{Decomposition of Miss-classification Event}
		To obtain\begin{eqnarray*}
		P(\widehat{g}(x)\neq g(x))\leq P(r_{(k+1)}\geq v_{2p})+P(|\widehat{\eta}(x)-\eta(B(x,r_{(k+1)}))|\geq \Delta),
		\end{eqnarray*}
		from definition of $\partial_{p,\Delta}$, $\mathcal{X}_{p,\Delta}^+$, and $\mathcal{X}_{p,\Delta}^-$, the event of $g(x)\neq \widehat{g}(x)$ can be covered as:
		\begin{eqnarray*}
			1_{\{ g(x)\neq \widehat{g}_{k,n,\gamma}(x) \}}&\leq& 1_{\{ x\in \partial_{p,\Delta}\}}\\
			&&+1_{\{ r_{(k+1)}\geq v_{2p} \}}\\
			&&+1_{\{ |\widehat{\eta}(x)-\eta(B(x,r_{(k+1)}))|\geq \Delta \}}.
		\end{eqnarray*}
		Assume $\eta(B(x,r_{(k+1)})>1/2$ and $x\in\mathcal{X}_{p,\Delta}^+$, when  $\widehat{\eta}(x)<1/2$, $$\eta(B(x,r_{(k+1)}))-\widehat{\eta}(x) >\eta(B(x,r_{(k+1)})-1/2\geq \Delta.$$
		A similar result can be obtained when $x\in\mathcal{X}_{p,\Delta}^-$.
		
		Therefore the decomposition leads to the upper bound in (\ref{eqn:prob}).
		\subsection{Adopting Bernstein Inequality}
		Our aim is to obtain the bound
		\begin{eqnarray*}
			\mathbb{E}R_{n,k}(x)-R^*(x)&\leq& 2\Delta(x) \bigg[P(r_{(k+1)}>v_{2p})\\
			&&+P\bigg( \sum_{i=1}^kW_i(Y(X_{(i)})-\eta(B(x,r_{(k+1)}))>\Delta(x)-\Delta_0  \bigg)\bigg]\\ &\leq&\exp(-k/8)+2\Delta(x)\exp(-C_6k(\Delta(x)-\Delta_0)^2).
		\end{eqnarray*}
		Denote $\phi_i=\phi\left(\frac{\|X_{(i)}-x\|}{\|X_{(k+1)}-x\|}\right)$ for simplicity, then
		fixing $r_{(k+1)}$, since $\mathbb{E}\phi_1Y(X_{(i)})=\eta(B(x,r_{(k+1)}))$, we have
		\begin{eqnarray*}
			&& \Delta(x)P\bigg( \sum_{i=1}^kW_iY(X_{(i)})-\eta(B(x,r_{(k+1)}))>\Delta(x)-\Delta_0 \bigg)\\
			&=& \Delta(x) P\bigg( \sum_{i=1}^k\phi_iY(X_{(i)})>(\eta(B(x,r_{(k+1)}))+\Delta(x)-\Delta_0)\sum_{i=1}^k\phi_i \bigg)\\
			&=& \Delta(x) P\bigg( \sum_{i=1}^k\phi_iY(X_{(i)})-k\mathbb{E}\phi_1\eta(X_{(1)})-[\eta(B(x,r_{(k+1)}))+\Delta(x)-\Delta_0]\sum_{i=1}^k\left(\phi_i-\mathbb{E}\phi_1\right)\\&&\qquad\qquad\qquad\qquad\qquad\qquad>k[\eta(B(x,r_{(k+1)}))+\Delta(x)-\Delta_0]\mathbb{E}\phi_1-k\mathbb{E}\phi_1\eta(X_{(1)}) \bigg)\\
			&=& \Delta(x) P\bigg\{ \sum_{i=1}^k\phi_i(Y(X_{(i)})-\eta(B(x,r_{(k+1)})))-(\Delta(x)-\Delta_0)\left(\sum_{i=1}^k\phi_i-k\mathbb{E}\phi_1\right)\\&&\qquad\qquad\qquad\qquad\qquad\qquad>k(\Delta(x)-\Delta_0)\mathbb{E}\phi_1\eta(X_{(1)})\bigg\}.
		\end{eqnarray*}
		Note that
		\begin{eqnarray*}
			&&\mathbb{E}\sum_{i=1}^k\phi_i(Y(X_{i})-\eta(B(x,r_{(k+1)})))=0,\\
			&&\mathbb{E}(\Delta(x)-\Delta_0)\left(\sum_{i=1}^k\phi_i-k\mathbb{E}\phi_1\right)=0,
		\end{eqnarray*}
		hence we denote $$Z_i(x)=\phi_i(Y(X_{(i)})-\eta(B(x,r_{(k+1)})))-(\Delta(x)-\Delta_0)\phi_i+(\Delta(x)-\Delta_0)\mathbb{E}\phi_1$$for simplicity. The choice of $\phi$ satisfies A.0, which indicates that through adopting Bernstein inequality, fixing $r_{(k+1)}$, there exists some constant $c_4>0$, such that
		\begin{eqnarray*}
			&&\Delta(x)P\bigg( \sum_{i=1}^kW_i(Y(X_{(i)})-\eta(B(x,r_{(k+1)}))>\Delta(x)-\Delta_0  \bigg)\\&=&\Delta(x)P\left(\sum_{i=1}^k Z_i>k(\Delta(x)-\Delta_0)\mathbb{E}\phi_1\eta(X_{(1)})\right)\\&\leq&\Delta(x)\exp(-C_6k(\Delta(x)-\Delta_0)^2).
		\end{eqnarray*}
		
\end{document}